\newtheorem{lemma}{Lemma}
\newtheorem{theorem}{Theorem}
\title{Online Markov Decoding: Lower Bounds and Near-Optimal Approximation Algorithms}
\author{%
  Vikas K.~Garg\\
  MIT \\
  \texttt{vgarg@csail.mit.edu} \\
  \And
  Tamar Pichkhadze \\
  MIT \\
  \texttt{tamarp@alum.mit.edu}\\
  }
\newlength\figH
\newlength\figW
\pgfplotsset{compat=newest} 
\pgfplotsset{plot coordinates/math parser=false} 
 \pgfplotsset{compat=newest} 
 \pgfplotsset{plot coordinates/math parser=false} 
\newtheorem{thm}{Theorem}
\newcommand{\sunderb}[2]{
  \mathclap{\underbrace{\makebox[#1]{$ $}}_{#2}}
}
\patchcmd\@combinedblfloats{\box\@outputbox}{\unvbox\@outputbox}{}{\errmessage{\noexpand patch failed}}
\begin{document}
\maketitle

\begin{abstract}
We resolve the fundamental problem of online decoding with general $n^{th}$ order ergodic Markov chain models. Specifically, we provide deterministic and randomized algorithms whose performance is close to that of the optimal offline algorithm even when latency is small. Our algorithms admit efficient implementation via dynamic programs, and readily extend to (adversarial) non-stationary or time-varying settings. We also establish lower bounds for online methods under latency constraints in both deterministic and randomized settings, and show that no online algorithm can perform significantly better than our algorithms. Empirically, just with latency one, our algorithm outperforms the online step algorithm by over 30\% in terms of decoding agreement with the optimal algorithm on genome sequence data. 
%
\end{abstract}

\section{Introduction} \label{Introduction}
Markov models, in their various incarnations, have for long formed the backbone of diverse applications such as telecommunication \cite{V1967}, biological sequence analysis \cite{G2018}, protein structure prediction \cite{CGW2004}, language modeling \cite{HPCK2013}, automatic speech recognition \cite{B2003}, financial modeling \cite{BB2006}, gesture recognition \cite{WQMD2006}, and  traffic analysis \cite{FHK2003, TRBMG2011}. In a Markov chain model of order $n$, the conditional distribution of next state at any time $i$ depends only on the current state and the previous $n-1$ states, i.e.,  $$ \mathbb{P}(y_{i}|y_1, \ldots, y_{i-1}) ~=~ \mathbb{P}(y_{i}|y_{i-n}, \ldots, y_{i-1})~~ \forall i~.$$
 Often, the states are not directly accessible but need to be inferred or {\em decoded} from the observations, i.e., a sequence of tokens {\em emitted} by the states. For instance, in tagging applications \cite{ATH2003}, each state pertains to a part-of-speech tag (e.g. noun, adjective) and each word $w_i$ in an input sentence $\boldsymbol{w} = (w_1, \ldots, w_T)$ needs to be labeled with a probable tag $y_i$ that might have emitted the word. Thus, it is natural to endow each state with a distribution over the tokens it may emit. For example, $n^{th}$ order {\em hidden Markov models} ($n$-HMM) \cite{R1990} and $(n+1)$-gram language models \cite{HPCK2013} assume the joint distribution $\mathbb{P}({\bf y}, \boldsymbol{w}$) of states ${\bf y} = (y_1, \ldots, y_T)$ and observations $\boldsymbol{w}$ factorizes as 
 \vskip -0.4cm
$$\mathbb{P}({\bf y}, \boldsymbol{w}) ~=~ \prod_{i=1}^T \mathbb{P}(y_i|y_{i-n}, \ldots, y_{i-1}) ~\mathbb{P}(w_i|y_{i})~,$$
\vskip -0.3cm
where $y_{-n+1}, \ldots, y_0$ are dummy states, and the {\em transition} distributions $\mathbb{P}(y_i|y_{i-n}, \ldots, y_{i-1})$ and the {\em emission} distributions $\mathbb{P}(w_i|y_i)$ are estimated from data. We call a Markov model {\em ergodic} if there is a {\em diameter} $\Delta$ such that any state can be reached from any other state in at most $\Delta$ transitions. For instance, a fully connected Markov chain pertains to  $\Delta = 1$.  When the transition distributions do not change with time $i$, the model is called {\em time-homogeneous}, otherwise it is {\em non-stationary}, {\em time-varying} or {\em non-homogeneous} \cite{O2005, PRG2001, CX2011}. Given a sequence $\boldsymbol{w}$ of $T$ observations, the decoding problem is to infer a most probable sequence or {\em path} ${\bf y^*}$ of $T$ states
\begin{eqnarray*} {\bf y^*} ~\in~ \arg\!\max_{\bf y} \mathbb{P}({{\bf y}}, \boldsymbol{w}) ~=~ \arg\!\max_{\bf y} ~\log \mathbb{P}({\bf y}, \boldsymbol{w}) ~.
\end{eqnarray*}
\vskip -0.3cm
Decoding is a key inference problem in other structured prediction settings \cite{TGK2003, THJA2004} as well, e.g., maximum entropy Markov models (MEMM) \cite{MFP2000} and conditional random fields (CRF) \cite{LMP2001, PBJ2009} employ learnable parameters $\boldsymbol{\theta}$ and define the conditional dependence of each state on the observations through feature functions $\boldsymbol{\phi}$. The decoding task in all these models can be expressed in the form 
\vskip -0.45cm
\begin{equation} \label{Viterbi} {\bf y^*} ~\in~ \arg\!\max_{{\bf y}} \sum_{i=1}^{T} \overline{R}_i(y_i|y_{i-n}, \ldots, y_{i-1})~, \end{equation}
\vskip -0.4cm
where we have made the dependence on observations $\boldsymbol{w}$ implicit in the {\em reward} functions $\overline{R}_i$ as shown in Table \ref{reward}. 
\setlength{\tabcolsep}{4pt}
\begin{table}[t]
\caption{Standard Markov models in the reward form. We use  $y_{[i, j]}$ to denote $(y_{i}, y_{i+1}, \ldots, y_j)$. }
\label{reward}
 \begin{center}
 \begin{small}
 \begin{sc}
 \begin{tabular}{|lc|lc|}
 \toprule
\bf Model & \bf Reward $\overline{R}_i(y_i|y_{[i-n, i-1]})$ & \bf Model & \bf Reward $\overline{R}_i(y_i|y_{[i-n, i-1]})$\\
\toprule
$(n+1)$-gram & $\log \mathbb{P}(y_i|y_{i-n}, \ldots, y_{i-1}) + \log  \mathbb{P}(w_i|y_{i})$ &
1-HMM &  $\log \mathbb{P}(y_i|y_{i-1}) + \log \mathbb{P}(w_i|y_i)$\\
\toprule
$n$-MEMM & $\log \dfrac{\exp(\boldsymbol{\theta}^{\top} \boldsymbol{\phi}(y_{[i-n, i-1]}, y_i, \boldsymbol{w}, i))}{\sum_{y_i'} \exp(\boldsymbol{\theta}^{\top} \boldsymbol{\phi}(y_{[i-n, i-1]}, y_i', \boldsymbol{w}, i))} $ &
$n$-CRF & $\boldsymbol{\theta}^{\top} \boldsymbol{\phi}(y_{[i-n, i-1]}, y_i, \boldsymbol{w}, i)$  \\
  \bottomrule
 \end{tabular}
 \end{sc}
 \end{small}
 \end{center}
 \vskip -0.15in
\end{table}
The Viterbi algorithm \cite{V1967} is employed for solving problems of the form \eqref{Viterbi} exactly. However, the algorithm cannot decode any observation until it has processed the entire observation sequence, i.e., computed and stored for each state $s$ a most probable sequence of $T$ states that ends in $s$.
We say an algorithm has {\em latency} $L$ if $L$ is the smallest $B$ such that the algorithm needs to access at most $B+1$ observations $w_i, w_{i+1}, \ldots, w_{i+B}$ to generate the label for observation $w_i$ at any time $i$ during the decoding process. Thus, the latency of Viterbi algorithm on a sequence of length $T$ is $T-1$, which is prohibitive for large $T$, especially in 
memory impoverished systems such as IoT devices \cite{Gupta17, Kumar17, GDX18, ZAFJS2019}. Besides, the algorithm is not suitable for critical scenarios such as patient monitoring, intrusion detection, and credit card fraud monitoring where delay following the onset of a suspicious activity might be detrimental \cite{NVS2006}. 
Moreover, low latency is desirable for tasks such as drug discovery that rely on detecting interleaved coding regions in massive gene sequences.

A lot of effort has thus been, and continues to be, invested into speeding up the Viterbi algorithm, or reducing its memory footprint  \cite{BT2017}. Some prominent recent approaches include fast matrix multiplication \cite{CFR2016}, compression and storage reduction for HMM \cite{SBV2007, CW2008, LMWZ2009}, and heuristics such as beam search and simulated annealing \cite{KFYK2010, DM2005}. Several of these methods are based on the observation that if all the candidate subsequences in the Viterbi algorithm converge at some point, then all subsequent states will share a common subsequence up to that point \cite{BR2008, GDMAOJ2012}.  
However, these methods do not guarantee reduction in latency since, in the worst case, they still need to process all the rewards before producing any output. \cite{NVS2006}  introduced the {\em Online Step Algorithm} (OSA), with provable guarantees, to handle latency in first order models. However, they make a strong assumption that uncertainty in any state label decreases with latency.  This assumption does not hold for important applications such as genome data.  Moreover, OSA does not provide a direct control over the latency, and may require extensive tuning.  \cite{JKKSS2001} introduced algorithms for online server allocation in a first order fully connected non-homogeneous setting. None of these algorithms work for higher order settings that are constrained by latency. We now explain how we provide a complete resolution to the problem of decoding with general ergodic Markov chain models under latency constraints. 
\subsection*{Our contributions}
We introduce efficient, almost optimal deterministic and randomized algorithms for problems of the form \eqref{Viterbi} under latency constraints. Our bounds apply to general settings, e.g., when the rewards vary with time (non-homogeneous settings), or even when they are presented in an adversarial or adaptive manner. Our  guarantees hold for finite latency (i.e. not only asymptotically), and improve with increase in latency. Our algorithms are efficient dynamic programs that may not only be deployed in settings where Viterbi algorithm might be used but also, as we explained earlier, several others where it is impractical. 
Thus, our work would potentially widen the scope of and expedite scientific discovery in several fields that rely critically on efficient online Markov decoding.  

We also provide the first results on the limits of online Markov decoding under latency constraints. Specifically, we craft lower bounds for the online approximation of Viterbi algorithm in both deterministic and randomized ergodic chain settings. Moreover, we establish that no online algorithm can perform significantly better than our algorithms. In particular, our algorithms provide strong guarantees even for low latency, and nearly match the lower bounds for sufficiently large latency.  

We introduce several novel ideas and analyses in the context of approximate Markov decoding. For example, we approximate an undiscounted objective over {\em horizon} $T$ by a sequence of smaller discounted subproblems over horizon $L+1$, and track down the Viterbi algorithm by essentially foregoing rewards on at most $\tilde{\Delta} = \Delta + n - 1$ steps in each smaller problem. We also show how to accomplish effective {\em derandomization} to replicate the optimal expected performance of our randomized algorithm in the deterministic setting. 
  Our design of constructions toward proving lower bounds in a setting predicated on interplay of several heterogeneous variables, namely $L$, $n$, and $\Delta$, is another significant technical contribution.  We believe our tools will foster designing new online algorithms, and establishing combinatorial bounds for related settings such as  dynamic Bayesian networks, hidden semi-Markov models, and model based reinforcement learning. 
\setlength{\tabcolsep}{0.2pt}
\begin{table*}[t]
\caption{Summary of our results in terms of the competitive ratio $\rho$. Note that the effective diameter $\tilde{\Delta} = \Delta + n -1$. To fit some results within the margins, we use the standard notation $\Theta(\cdot)$ on the growth of functions and summarize the performance of Peek Search asymptotically in $L$. The non-asymptotic dependence on $L$ is made precise in all cases in our theorem statements.}
\label{tab:summary}
 \begin{center}
 \begin{small}
 \begin{sc}
 \begin{tabular}{lccc}
 \toprule
 & Lower Bound & Upper Bound (Our Algorithms)  \\
 \toprule
 Deterministic  &  & \\
($\Delta=1, n = 1$)& $1 + \dfrac{1}{L} + \dfrac{1}{L^2 + 1}$ & $\min\left\{\left(1 + \dfrac{1}{L}\right)
\sqrt[\leftroot{-3}\uproot{1} \scriptstyle L ]{L+1}, 1 + \dfrac{4}{L-7}\right\}$\\
\midrule
Randomized & $1 + \dfrac{(1-\epsilon)}{L+\epsilon}$ & $1 + \dfrac{1}{L}$\\
$(\Delta=1, n=1, \epsilon > 0)$ & & \\
\midrule
Deterministic  & $1 + \dfrac{\tilde{\Delta}}{L} \left(1 + \dfrac{\tilde{\Delta} + L -1}{(L - \tilde{\Delta}-1)^2 + 4 \tilde{\Delta}L- 3 \tilde{\Delta}}\right)$ & $1 + \min\bigg\{\Theta\left(\dfrac{\log L}{L - \tilde{\Delta} + 1}\right)~,$ \\
& & $\Theta\left(\dfrac{1}{L - 8 \tilde{\Delta} + 1}\right)\bigg\}$ \\
\midrule
Randomized ($\epsilon > 0$)  & $1 + \dfrac{\left(2^{\Delta-1} \lceil 1/\epsilon \rceil - 1 \right)n}{2^{\Delta-1} \lceil 1/\epsilon \rceil L + n}$  & $1 + \Theta \left(\dfrac{1}{L - \tilde{\Delta} + 1} \right)$\\
 \bottomrule
 \end{tabular}
 \end{sc}
 \end{small}
 \end{center}
 \vskip -0.1in
\end{table*}

\section{Overview of our results}  \label{Overview}
We introduce some notation. We define $[a, b] \triangleq (a, a+1, \ldots, b)$ and $[N] \triangleq (1, 2, \ldots, N)$. Likewise, $y_{[N]} \triangleq (y_1, \ldots, y_N)$ and $y_{[a, b]} \triangleq (y_a, \ldots, y_b)$. We denote the last $n$ states visited by the online algorithm at time $i$ by $\hat{y}_{[i-n, i-1]}$, and those by the optimal offline algorithm by $y^*_{[i-n, i-1]}$.   Defining positive reward functions $R_i = \overline{R}_i + p$ by adding a sufficiently large positive number $p$ to each reward, we note from \eqref{Viterbi} that an optimal sequence of states for input observations $\boldsymbol{w}$ of length $T$ is 
\vspace*{-0.2cm}
\begin{equation} \label{Rewardeq2} y^*_{[1, T]} \in \arg\!\max_{y_1, \ldots, y_T} \sum_{i=1}^{T} R_i(y_i|y_{[i-n, i-1]})~~.
\end{equation}
We use $OPT$ to denote the total reward accumulated by the optimal offline algorithm, and $ON$ to denote that received by the online algorithm. We evaluate the performance of any online algorithm in terms of its {\em competitive ratio} $\rho$, which is defined as the ratio $OPT/ON$. That is,
$$ \rho  ~=~  \sum_{i=1}^T R_i(y_i^*|y^*_{[i-n, i-1]}) \bigg/ \sum_{i=1}^T R_i(\hat{y}_i|\hat{y}_{[i-n, i-1]})~~.$$
Clearly, $\rho \geq 1$. Our goal is to design online algorithms that have competitive ratio close to $1$. For randomized algorithms, we analyze the ratio obtained by taking expectation of the total online reward over its internal randomness.
The performance of any online algorithm depends on the order $n$, latency $L$, and {\em diameter} $\Delta$. Table \ref{tab:summary}  provides a summary of our results. Note that our algorithms are asymptotically optimal in $L$. For the finite $L$ case, we first consider the fully connected first order models. Our randomized algorithm matches the lower bound even\footnote{It is easy to construct examples where any algorithm  with no latency may be made to incur an arbitrarily high $\rho$. Thus, in the fully connected first order Markov setting, online learning is meaningful only for $L \geq 1$.}
with $L=1$ since we may set $\epsilon$ arbitrarily close to 0. Note that just with $L=1$, our deterministic algorithm achieves a competitive ratio 4, and this ratio reduces further as $L$ increases. Moreover our ratio rapidly approaches the lower bound with increase in $L$. Finally, in the general setting, our algorithms are almost optimal when $L$ is sufficiently large compared to  $\tilde{\Delta} = \Delta+n-1$. We call  $\tilde{\Delta}$ the {\em effective diameter} since it nicely encapsulates the roles of order $n$ and diameter $\Delta$ toward the quality of approximation. 

 The rest of the paper is organized as follows. We first introduce and analyze our deterministic {\em Peek Search} algorithm for homogeneous settings in section \ref{PeekSearchDesc}.  We then introduce the {\em Randomized Peek Search} algorithm in section \ref{RandomPeekDesc}. We propose the deterministic {\em Peek Reset} algorithm, in section \ref{PeekResetDesc}, that performs better than deterministic Peek Search for large $L$.  
 We then present the  lower bounds in section \ref{LowerBoundsDesc}. Finally, we 
demonstrate the merits of our approach via
strong empirical evidence on the problem of genome sequencing in section \ref{Experiments}.  For improved readability, we provide all the proofs in the supplementary material.  We also show how our guarantees extend to the non-homogeneous settings, and outline an efficient dynamic program in the supplementary material. 
\section{Peek Search} \label{PeekSearchDesc}
Our idea is to approximate the sum of rewards over $T$ steps in \eqref{Rewardeq2} by a sequence of smaller problems over $L+1$ steps. The Peek Search algorithm is so named since at each time $i$, besides the observation $w_i$, it {\em peeks} into the next $L$ observations 
$w_{i+1}, \ldots, w_{i+L}$. The algorithm then leverages the sub-sequence $w_{[i, i+L]}$ to
decide its next state $\hat{y}_i$. Let $\gamma \in (0, 1)$ be a {\em discount factor}. Specifically, the algorithm repeats the following procedure at each time $i$. 
First, it finds a path of length $L+1$ emanating from the current state $\hat{y}_{i-1}$ that fetches maximum {\em discounted} reward. The discounted reward on any path is computed by scaling down the $\ell^{th}$ edge, $\ell \in \{0, \ldots, L\}$, on the path  by $\gamma^{\ell}$. Then, the algorithm moves to the first state of this path and repeats the procedure at time $i+1$. Note that at time $i+1$, the algorithm need not continue with the second edge on the optimal discounted path computed at previous time step $i$, and is free to choose an alternative path. Formally, at time $i$, the algorithm computes $\tilde{y}_i \triangleq (\tilde{y}_i^0, \tilde{y}_i^1, \ldots, \tilde{y}_i^L)$ that maximizes the following objective over valid paths $y = (y_i, \ldots, y_{i+L})$,
 \begin{eqnarray*}   R(y_i|\hat{y}_{[i-n, i-1]}) ~+~  \sum_{j=1}^{n-1} \gamma^j R(y_{i+j}|\hat{y}_{[i-n+j, i-1]}, y_{[i, i+j-1]})  ~+~ \sum_{j=n}^L \gamma^j R(y_{i+j}|y_{[i+j-n, i+j-1]})~, \end{eqnarray*}
sets the next state $\hat{y}_i =  \tilde{y}_i^0$, and receives the reward $R(\hat{y}_i|\hat{y}_{[i-n, i-1]})$. Note that we have dropped the subscript $i$ from $R_i$ since in the homogeneous settings, the reward functions do not change with time $i$. For any given $L$ and $\tilde{\Delta}$, we optimize to get the optimal $\gamma$.  Intuitively, $\gamma$ may be viewed as an {\em explore-exploit} parameter that indicates the confidence of the online algorithm in the best discounted path: $\gamma$ grows as $L$ increases, and thus a high value of $\gamma$ indicates that the path computed at a time $i$ may be worth tracing at subsequent few steps as well. In contrast, the algorithm is uncertain for small values of $L$. We have the following near-optimal result on the performance of Peek Search. 
\begin{figure*}[t]
\centering
    \input{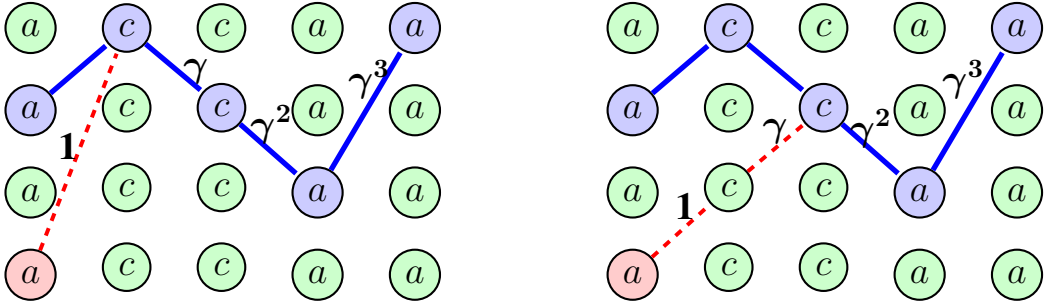}
    \caption{{\bf Visual intuition for the setting $n=1$}. ({\bf Left}) A trellis diagram obtained by unrolling a fully connected Markov graph (i.e. diameter $\Delta = 1$). The states are shown along the rows, and time along the columns.  The system is currently in state 4 (shown in red), and has access to rewards and observations (shown inside circles) for the next $(L+1)$ steps. The unknown optimal path is shown in blue, and the weights with which rewards are scaled are shown on the edges. One option available with the online algorithm is to jump to state 1 (possibly fetching zero reward) and then follow the optimal path for the subsequent $L$ steps. Note that the online algorithm might choose a different path, but it is guaranteed at least as much reward since it maximizes the discounted reward over $L+1$ steps. $\gamma$ approaches 1 with increase in $L$. This would ensure that the online algorithm makes nearly the most of $L$ steps every $L+1$ steps.   ({\bf Right}) If the graph is not fully connected, some of the transitions may not be available (e.g. state 4 to state 1 in our case). Therefore, the online algorithm might not be able to join the optimal path in one step, and thus may have to forgo additional rewards. 
    }
    \label{fig:intuition}
\end{figure*}
\begin{thm} \label{TheoremThird}
The competitive ratio of Peek Search on Markov chain models of order $n \geq 1$ with diameter $\Delta \geq 1$ for  $L \geq \Delta + n - 1$ is
\begin{eqnarray*} \rho & ~\leq~  \dfrac{L+1}{L-\Delta-n+2} \left(\dfrac{L+1}{\Delta+n-1}\right)^{(n + \Delta - 1)/(L - \Delta - n + 2)}
& ~=~ 1 + \Theta \left(\dfrac{\log L}{L - \tilde{\Delta} + 1} \right)~.
\end{eqnarray*}
\end{thm}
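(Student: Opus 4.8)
The plan is to introduce, for each time $i$, the quantity $D_i$ equal to the maximum \emph{discounted} reward of a length-$(L+1)$ path emanating from $\hat{y}_{i-1}$, i.e. the value of the objective that Peek Search optimizes at step $i$. I would then sandwich $D_i$ between two bounds: an upper bound relating $D_i$ to the reward $\hat{r}_i \triangleq R(\hat{y}_i \mid \hat{y}_{[i-n,i-1]})$ that the algorithm actually banks at step $i$, and a lower bound relating $D_i$ to a tail of the optimal path's rewards. Writing $r^*_i \triangleq R(y^*_i \mid y^*_{[i-n,i-1]})$, the goal is to convert these two bounds into a comparison between $ON = \sum_i \hat{r}_i$ and $OPT = \sum_i r^*_i$.

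For the upper bound I would argue that truncating the plan chosen at step $i$ yields a feasible shifted plan at step $i+1$. Concretely, the states $\tilde{y}_i^1,\dots,\tilde{y}_i^L$ of the path chosen at step $i$ form a valid prefix of a length-$(L+1)$ path starting from $\hat{y}_i$, and its discounted value at step $i+1$ equals $\gamma^{-1}(D_i - \hat{r}_i)$; since $D_{i+1}$ maximizes over all such paths and the remaining rewards are nonnegative, this gives the recursion $D_i \leq \hat{r}_i + \gamma D_{i+1}$. The one point to verify carefully is that the reward contexts line up after the shift, which holds because $\hat{y}_i = \tilde{y}_i^0$ becomes the new history, so the order-$n$ conditioning at step $i+1$ matches that used at step $i$.

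The crux, and what I expect to be the main obstacle, is the lower bound $D_i \geq \sum_{j=\tilde{\Delta}}^{L} \gamma^j\, r^*_{i+j}$, where $\tilde{\Delta} = \Delta + n - 1$. Here I would exhibit an explicit feasible ``catch-up'' path: use the first $\Delta$ transitions to reach the state $y^*_{i+\tilde{\Delta}-n}$ (possible by the diameter assumption), then replay the optimal states for another $n-1$ steps to rebuild the full order-$n$ context, and thereafter copy the optimal path exactly. From position $\tilde{\Delta}$ onward the $n$-state context coincides with that of the optimal path, so those edges earn precisely $r^*_{i+\tilde{\Delta}},\dots,r^*_{i+L}$; dropping the nonnegative rewards collected during the first $\tilde{\Delta}$ synchronization steps yields the claimed bound. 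This step is delicate because it is exactly where the effective diameter $\tilde{\Delta} = \Delta + n - 1$ enters --- $\Delta$ steps to reach a state and $n-1$ further steps to align the higher-order context --- and one must check that every transition along the catch-up path is legal and that the count of synchronization steps is tight.

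Finally I would combine the two bounds by summing over $i$. Telescoping the recursion gives $(1-\gamma)\sum_i D_i \leq ON$ up to boundary terms, while the lower bound gives $\sum_i D_i \geq \big(\sum_{j=\tilde{\Delta}}^{L}\gamma^j\big)\,OPT$, again up to boundary terms; since each shift $\sum_i r^*_{i+j}$ reproduces $OPT$ except for $O(L)$ endpoint rewards and $OPT,ON = \Theta(T)$, these corrections are negligible. Substituting $\sum_{j=\tilde{\Delta}}^{L}\gamma^j = \gamma^{\tilde{\Delta}}\frac{1-\gamma^{L-\tilde{\Delta}+1}}{1-\gamma}$ collapses the two inequalities to $\rho \leq \big(\gamma^{\tilde{\Delta}}(1-\gamma^{L-\tilde{\Delta}+1})\big)^{-1}$. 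The remaining task is calculus: maximizing $f(\gamma)=\gamma^{\tilde{\Delta}}-\gamma^{L+1}$ gives the optimizer $\gamma = \big(\tilde{\Delta}/(L+1)\big)^{1/(L-\tilde{\Delta}+1)}$, and plugging it back produces $\frac{L+1}{L-\tilde{\Delta}+1}\big(\frac{L+1}{\tilde{\Delta}}\big)^{\tilde{\Delta}/(L-\tilde{\Delta}+1)}$, which is the stated bound; the final $1 + \Theta(\log L /(L-\tilde{\Delta}+1))$ asymptotics then follow by taking logarithms and using $\tilde{\Delta}/(L-\tilde{\Delta}+1)\to 0$.
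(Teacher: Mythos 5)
Your proposal is correct and is essentially the paper's own proof: your two bounds on $D_i$ are exactly the paper's two candidate-path inequalities (the catch-up path that forfeits at most $\Delta$ travel steps plus $n-1$ context-rebuilding steps, and the shifted previous plan, whose recursion $D_i \leq \hat{r}_i + \gamma D_{i+1}$ is precisely the paper's weighting of the two inequalities by $1-\gamma$ and $\gamma$ before telescoping), leading to the same ratio $\bigl(\gamma^{\tilde{\Delta}} - \gamma^{L+1}\bigr)^{-1}$ and the same optimal $\gamma = \bigl(\tilde{\Delta}/(L+1)\bigr)^{1/(L-\tilde{\Delta}+1)}$. The only cosmetic difference is the treatment of boundary terms, where the paper uses explicit zero-reward padding of $L+1$ steps at each end rather than an $OPT, ON = \Theta(T)$ asymptotic argument, which makes the bound exact for finite $T$.
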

\begin{proof} (Sketch)
We first consider the fully connected first order setting (i.e. $n = 1, \Delta = 1$). Our analysis hinges on two important facts. Since Peek Search chooses a path that maximizes the total discounted reward over next $(L+1)$ steps, it is guaranteed to fetch all of the discounted reward pertaining to the optimal path except that available on the first step of the optimal path (see Fig. \ref{fig:intuition} for visual intuition). Alternatively, Peek Search could have persisted with the maximizing path computed at the previous time step (recall that only first step of this path was taken to reach the current state). We exploit the fact that this path is now worth $1/\gamma$ times its anticipated value at the previous step.

Now consider $n > 1$. The online algorithm may jump to any state on the optimal offline, i.e. Viterbi path, in one step. However, the reward now depends on the previous $n$ states, and so the online algorithm may have to wait additional $n-1$ steps before it could trace the subsequent optimal path. Finally, as explained in Fig. \ref{fig:intuition}, when $\Delta > 1$, the online algorithm may have to forfeit rewards on (at most) $\Delta$ steps, in addition to the $n-1$ steps, in order to join the optimal path. 
\end{proof}
We show in the supplementary material that this guarantee on the performance of Peek Search extends to the non-homogeneous settings, including those where the rewards may be adversarially chosen. Note that na\"ively computing a best path by enumerating all paths of of length $L+1$ would be computationally prohibitive since the number of such paths is exponential in $L$. Fortunately, we can design an efficient dynamic program for Peek Search. Specifically, we can show that for every $\ell \in \{1, 2, \ldots, L\}$, the reward on the optimal discounted path of  length $\ell$ can be recursively computed from  an optimal path of length $\ell$-$1$ using $O(|K|^n)$ computations. We have the following result. 
\begin{thm} \label{TheoremNinth}
Peek Search can compute a best $\gamma$-discounted path for the next $L+1$ steps, in $n^{th}$ order Markov chain models, in time $O(L|K|^{n})$, where $K$ is the set of states.
\end{thm}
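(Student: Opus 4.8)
The plan is to recognize the maximization that defines $\tilde{y}_i$ as a maximum-weight path problem on a layered directed acyclic graph — a \emph{discounted trellis} — and to solve it by forward dynamic programming. Unrolling the Markov graph over the peek window $w_{[i,i+L]}$ produces $L+1$ layers; in layer $\ell$ I keep one node per admissible \emph{context}, i.e. per tuple of the most recent states needed to score the next transition, of which there are $O(|K|^n)$. An edge from a context in layer $\ell-1$ to a context in layer $\ell$ is present exactly when the corresponding transition is valid in the underlying Markov graph, and I give it weight $\gamma^{\ell} R(\cdot \mid \cdot)$, namely the reward of the appended state discounted by the layer-dependent factor $\gamma^{\ell}$. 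A best $\gamma$-discounted path of length $L+1$ is then a maximum-weight source-to-terminal path in this trellis.

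The core observation making dynamic programming correct is that the discount does not break optimal substructure: the factor $\gamma^{\ell}$ attached to an edge entering layer $\ell$ depends only on $\ell$, not on the prefix used to reach layer $\ell-1$. Hence, writing $V_\ell(\mathbf{s})$ for the maximum discounted reward of a partial path from the current state ending at context $\mathbf{s}$ in layer $\ell$, the Bellman recursion
$$V_\ell(\mathbf{s}) \;=\; \max_{\mathbf{s}' \to \mathbf{s}} \Big[\, V_{\ell-1}(\mathbf{s}') + \gamma^{\ell}\, R\big(\text{appended state of } \mathbf{s} \,\big|\, \mathbf{s}'\big) \,\Big]$$
is valid, which I would confirm by a one-line induction on $\ell$. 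I would seed the first layers using the committed history $\hat{y}_{[i-n,i-1]}$: the initial $n-1$ summands of the Peek Search objective mix already-chosen states with path states, so in the boundary layers the known coordinates of each context are fixed rather than maximized over. Finally $\hat{y}_i = \tilde{y}_i^0$ is recovered by following backpointers from the maximizing terminal context.

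For the running time, each layer's value function has $O(|K|^n)$ entries and is computed from the previous layer by a single sweep over the admissible transitions via the recursion above; accumulated over the $L+1$ layers this yields the claimed $O(L|K|^n)$ bound. The point worth emphasizing is the contrast with naive enumeration, whose cost is the number of length-$(L+1)$ paths and is therefore exponential in $L$; the dynamic program collapses this to linear in $L$ precisely by reusing the optimal sub-path values $V_{\ell-1}(\cdot)$.

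I expect the main obstacle to be bookkeeping rather than conceptual. One must define the context so that it carries exactly the history required to evaluate each reward and is shifted by one state at every step, and one must correctly stitch in the committed prefix $\hat{y}_{[i-n,i-1]}$ across the first $n-1$ layers, where the objective's dependence on past and future states is heterogeneous. Once the context and the layer-indexed discount are arranged so that Bellman optimality applies verbatim, the complexity accounting is immediate.
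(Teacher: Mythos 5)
Your proposal is correct and follows essentially the same route as the paper's own proof: a Viterbi-style forward dynamic program over end-contexts of recent states, with the layer-indexed discount $\gamma^{\ell}$ folded into the edge weights (which, as you note, preserves optimal substructure), the first $n-1$ layers seeded by the committed history $\hat{y}_{[i-n,i-1]}$, and backpointers for the traceback. The paper's table $\Pi_i(\ell,\cdot)$ and its recursion $\Pi_i(\ell, v_{[2,n]}) = \max_{v_1}\bigl(\Pi_i(\ell-1, v_{[n-1]}) + \gamma^{\ell} R(v_n|v_{[n-1]})\bigr)$ is exactly your $V_\ell(\mathbf{s})$ Bellman update, with the same $O(L|K|^n)$ accounting.
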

We outline an efficient procedure,  underlying Theorem \ref{TheoremNinth}, in the supplementary material. 
We now introduce two algorithms that do not recompute the paths at each time step. These algorithms provide even tighter (expected) approximation guarantees than Peek Search for larger values of the latency $L$. 
\section{Randomized Peek Search} \label{RandomPeekDesc}
We first introduce the {\em Randomized Peek Search} algorithm, which removes the asymptotic log factor from the competitive ratio in Theorem \ref{TheoremThird}. Unlike Peek Search, this method does not discount the rewards on paths. Specifically,  the algorithm first selects a {\em reset point} $\ell$ uniformly at random from $\{1, 2, \ldots, L+1\}$. This number is a private information for the online algorithm. The randomized algorithm recomputes the optimal {\em non-discounted} path (which corresponds to $\gamma = 1$) of length $(L+1)$, once every $L+1$ steps, at each time $i*(L+1)+\ell$, and follows this path for next $L+1$ steps without any updates. We have the following result that underscores the benefits of randomization.   
\begin{thm} \label{TheoremSeventh}
Randomized Peek Search achieves, in expectation, on Markov chain models of order $n$ with diameter $\Delta$ a competitive ratio
\begin{eqnarray*}\rho   & ~~\leq~~    1 + \dfrac{\Delta + n - 1}{L+1 - (\Delta + n - 1)}
& ~~=~~   1 + \Theta \left(\dfrac{1}{L - \tilde{\Delta} + 1} \right)~.
\end{eqnarray*}
\end{thm}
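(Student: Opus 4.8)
The plan is to exploit the random alignment of the reset points so that, in expectation, only a $\tilde{\Delta}/(L+1)$ fraction of the optimal reward is lost per step. Fix the random reset point $\ell$; it partitions the horizon into blocks of $L+1$ consecutive steps, each beginning at a reset time $i(L+1)+\ell$. Within a single block the algorithm computes and follows the \emph{optimal} (non-discounted, $\gamma=1$) path of length $L+1$ from its current state, so its reward on that block is at least the reward of \emph{any} valid length-$(L+1)$ path from the same starting state. I would therefore lower-bound the online reward block by block by exhibiting one convenient competitor: the \emph{join-then-follow} path, which spends its first few steps climbing onto the Viterbi path and then copies it.

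The key structural step is to show that at most $\tilde{\Delta}=\Delta+n-1$ steps are forgone per block. Starting from whatever state the algorithm occupies at the reset time, ergodicity supplies a path of at most $\Delta$ transitions reaching the Viterbi state $\Delta-1$ positions into the block; copying the Viterbi path for the next $n-1$ steps then aligns the entire window of $n$ previous states. From that point on, the competitor's reward at each position $j$ equals $R(y_j^*\mid y^*_{[j-n,j-1]})$ exactly, since both its current state and its length-$n$ history coincide with the optimal path. Writing $F_\ell$ for the set of forgone positions (at most $\tilde{\Delta}$ per block), and using that rewards are nonnegative, I obtain the pointwise bound $ON \ge \sum_{j\notin F_\ell} R(y_j^*\mid y^*_{[j-n,j-1]}) = OPT - \sum_{j\in F_\ell} R(y_j^*\mid y^*_{[j-n,j-1]})$.

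It then remains to average over the randomness. Because $\ell$ is uniform on $\{1,\dots,L+1\}$, the offset of any fixed position $j$ within its block is uniform on $\{0,\dots,L\}$, so $j$ lies in the forgone prefix with probability at most $\tilde{\Delta}/(L+1)$. Taking expectations and using linearity gives $\mathbb{E}[ON] \ge \bigl(1-\tfrac{\tilde{\Delta}}{L+1}\bigr)OPT = \tfrac{L+1-\tilde{\Delta}}{L+1}\,OPT$, whence $\rho = OPT/\mathbb{E}[ON] \le \tfrac{L+1}{L+1-\tilde{\Delta}} = 1 + \tfrac{\Delta+n-1}{L+1-(\Delta+n-1)}$, matching the claim.

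The main obstacle I anticipate is the structural alignment step: proving that exactly $\Delta+n-1$ forgone steps suffice to synchronize both the current state (via the diameter bound) and the full length-$n$ history (the extra $n-1$ steps). I would also verify that the boundary blocks (the partial segment before the first reset and the final partial block) do not break the probability estimate; since they only delete forgone positions or shorten matched runs, the inequality $\mathbb{P}[j\in F_\ell]\le \tilde{\Delta}/(L+1)$ still holds, so they leave the stated ratio unaffected.
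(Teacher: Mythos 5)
Your proposal is correct and follows essentially the same argument as the paper's proof: a block decomposition induced by the random reset point, a join-then-follow competitor that forgoes at most $\tilde{\Delta} = \Delta + n - 1$ steps per block (at most $\Delta$ to reach the Viterbi path plus $n-1$ to align the length-$n$ history), and an averaging step showing the expected forgone reward is at most a $\tilde{\Delta}/(L+1)$ fraction of $OPT$. The only cosmetic difference is that you phrase the averaging as a per-position probability bound $\mathbb{P}[j \in F_\ell] \le \tilde{\Delta}/(L+1)$ via linearity of expectation, whereas the paper performs the equivalent explicit exchange of sums over the reset point and the block offset.
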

\begin{proof}(Sketch)
Since it maximizes the non-discounted reward, for each random reset point $\ell$, the online algorithm receives at least as much reward as the optimal offline algorithm minus the reward on at most $\tilde{\Delta}$ steps every $L+1$ steps. We show that, in expectation, Peek Reset misses on only (at most) a $\tilde{\Delta}/(L+1)$ fraction of the optimal offline reward.
\end{proof}

Theorem \ref{Theorem2} is essentially tight since it  nearly matches the lower bound as described previously in section \ref{Overview}.
We leverage insights from Randomized Peek Search to translate its almost optimal expected performance to the deterministic setting. Specifically, we introduce the Peek Reset algorithm that may be loosely viewed as a {\em derandomization} of Randomized Peek Search. The main trick is to conjure a sequence of reset points, each over a variable number of steps.  This allows the algorithm to make adaptive decisions about when to forgo rewards.  Both Randomized Peek Search and Peek Reset can compute rewards on their  paths efficiently by using the procedure for Peek Search as a subroutine.
\section{Peek Reset} \label{PeekResetDesc}
We now present the deterministic {\em Peek Reset} algorithm that performs better than Peek Search when the latency $L$ is sufficiently large. Like Randomized Peek Search, Peek Reset recomputes a best non-discounted path and takes multiple steps on this path. However, the number of steps taken is not fixed to $L+1$ but may vary in each {\em phase}.  Specifically, let $(i)$ denote the time at which phase $i$ begins.  The algorithm follows, in phase $i$, a sequence of states $\hat{y}(i) \triangleq (\hat{y}_{(i)}, \hat{y}_{(i)+1}, \ldots, \hat{y}_{T_i-1})$ that maximizes the following objective over valid paths $y = (y_{(i)}, \ldots, y_{T_i-1})$ :
\begin{eqnarray*}
 f(y) ~~\triangleq~~  R(y_{(i)}|\hat{y}_{[(i)-n, (i)-1]}) 
 & + &  \sum_{j=1}^{n-1} R(y_{(i)+j}|\hat{y}_{[(i)-n+j, (i)-1]}, y_{[(i), (i)+j-1]}) \\
& + &  \sum_{j=n}^{T_i-(i)-1} R(y_{(i)+j}|y_{[(i)+j-n, (i)+j-1]})~,
\end{eqnarray*}
where $T_i$ is chosen from the following set (breaking ties arbitrarily)
$$\arg\!\!\!\!\!\!\!\!\!\!\!\!\!\min_{t \in [(i) + L/2 + 1], (i)+L]} \max_{(y_{t-n}, \ldots, y_t)} R(y_t|y_{[t-n, t-1]}) ~.$$
Then, the next phase $(i+1)$ begins at time $T_i$. We have the following result. 

\begin{thm} \label{TheoremEighth}
The competitive ratio of Peek Reset on Markov chain models of order $n$ with diameter $\Delta$~ for latency $L$ is 
\begin{eqnarray*} \rho & ~~\leq~~  1 ~+~ \dfrac{2(\Delta+n)(\Delta+n-1)}{L-8(\Delta+n-1)+1} & ~~=~~  1 + \Theta \left(\dfrac{1}{L - 8\tilde{\Delta} + 1} \right)~.
\end{eqnarray*}
\end{thm}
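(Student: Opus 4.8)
The plan is to analyze Peek Reset phase by phase. Write $[(i),T_i-1]$ for phase $i$, so that $(i{+}1)=T_i$, and let $M_t \triangleq \max_{(y_{t-n},\ldots,y_t)} R(y_t\mid y_{[t-n,t-1]})$ be the largest reward attainable at time $t$ over all state configurations; the reset rule is precisely $M_{T_i}=\min_{t\in[(i)+L/2+1,(i)+L]} M_t$. Within each phase the online path is, by construction, the reward-maximizing valid path emanating from the online's current configuration $\hat y_{[(i)-n,(i)-1]}$. The first step is therefore to lower bound $ON_i$ by exhibiting one concrete competitor path from that configuration: spend at most $\tilde{\Delta}=\Delta+n-1$ steps to reach the optimal offline state and reinstate its order-$n$ history (possible by ergodicity: $\le\Delta$ steps to reach the state, $\le n-1$ more to align the window), then copy $y^*$ for the remainder of the phase. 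Since rewards are positive, discarding the catch-up steps only loses reward, giving $ON_i \ge OPT_i - \sum_{t=(i)}^{(i)+\tilde{\Delta}-1} R(y^*_t\mid y^*_{[t-n,t-1]})$, where $OPT_i$ is the optimal reward on the phase. Summing over phases yields
\[
ON \;\ge\; OPT \;-\; \sum_i \sum_{t=(i)}^{(i)+\tilde{\Delta}-1} R\big(y^*_t \mid y^*_{[t-n,t-1]}\big),
\]
so the entire problem reduces to bounding this total ``reset loss.''

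The second step is to exploit the min-max reset rule to control that loss. Each phase has length $T_i-(i)\ge L/2+1$, so the $\tilde{\Delta}$-blocks $[(i),(i)+\tilde{\Delta}-1]$ are disjoint, and each begins at a reset time $(i)=T_{i-1}$ chosen to minimize $M_t$ over a window of length $\approx L/2$. Because a minimum is at most an average, $M_{T_{i-1}} \le \tfrac{2}{L}\sum_{t} M_t$ over that window, which is the mechanism that makes resets cheap: the algorithm forgoes reward precisely where the \emph{attainable} reward is smallest, so the loss at each reset is a bounded fraction of the reward harvested over the surrounding $\Theta(L)$ steps. The bookkeeping then charges each reset loss against the reward the online actually collects during the long phase that contains the corresponding window (of length $\ge L/2+1$), and the factor $2$ in the final numerator is exactly the conversion from ``per phase of length $\approx L/2$'' to ``per unit time over $L$.''

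The third step is the optimization and latency accounting. Tracking the constants, the catch-up contribution at distance $k$ from a reset accumulates like $\sum_{k=1}^{\tilde{\Delta}} k = \tfrac{\tilde{\Delta}(\tilde{\Delta}+1)}{2}$, which after the factor-$2$ per-step-to-per-$L$ conversion produces the numerator $2(\Delta+n)(\Delta+n-1)=2(\tilde{\Delta}+1)\tilde{\Delta}$; subtracting the slack needed to guarantee that every phase is long enough to amortize a reset (roughly $4\tilde{\Delta}$ off each half-window, doubled) yields the denominator $L-8\tilde{\Delta}+1$. One must also verify that the rule is implementable with latency $L$: at time $(i)$ the algorithm inspects $M_t$ for $t\le(i)+L$ and the maximizing path up to $T_i-1\le(i)+L-1$, all within the lookahead budget, and the per-phase path is computed with the Peek Search dynamic program of Theorem~\ref{TheoremNinth} as a subroutine. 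Assembling $\rho=OPT/ON\le 1+\big(\text{reset loss}\big)/ON$ and substituting the bound gives the claimed $1+\tfrac{2(\Delta+n)(\Delta+n-1)}{L-8(\Delta+n-1)+1}$.

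The hard part will be the second step, specifically bounding the reset loss over the interior catch-up steps. The min-max rule directly controls only the single boundary value $M_{(i)}=M_{T_{i-1}}$; the remaining $\tilde{\Delta}-1$ steps $t=(i)+1,\ldots,(i)+\tilde{\Delta}-1$ lie just past the selected window and are not individually small. Making the argument rigorous requires charging these interior losses against reward the online genuinely collects (rather than against the loose quantities $M_t$, which may exceed the reward any path actually attains), and it is this delicate interior bound — growing linearly in the distance from the low-reward reset — that is responsible for the \emph{quadratic} dependence $\tilde{\Delta}(\tilde{\Delta}+1)$ and forces the phase length to exceed a fixed multiple of $\tilde{\Delta}$, yielding the $L-8\tilde{\Delta}+1$ denominator and the constraint that $L$ be sufficiently large relative to the effective diameter.
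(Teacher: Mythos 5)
Your first step --- the catch-up decomposition $ON \ge OPT - (\text{reset loss})$ --- coincides with the first half of the paper's proof, but your second step uses the min-max reset rule in the wrong direction, and that is a genuine gap. The paper's argument needs a second, independent inequality, which is the heart of the proof and is absent from your proposal: writing $x_{T_i} = \min_{t\in[(i)+L/2+1,\,(i)+L]} M_t$ in your notation, since $x_{T_i}$ is the \emph{minimum} of the per-time maxima over the half-window, \emph{every} time $t$ in that window admits some configuration fetching reward at least $x_{T_i}$; hence a competitor path --- and therefore the online algorithm, which maximizes the total undiscounted reward of each phase --- is guaranteed to collect at least $x_{T_i}$ once every $\Delta+n$ steps (at most $\tilde{\Delta}=\Delta+n-1$ steps to reposition and rebuild the order-$n$ history, one step to collect). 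This ``floor'' argument gives $ON \ge \frac{L-4\tilde{\Delta}}{2(\tilde{\Delta}+1)}\sum_i x_{T_i}$, and chaining it with the catch-up bound $ON \ge OPT - \tilde{\Delta}\sum_i x_{T_i}$ yields $\rho \le 1 + \frac{2(\tilde{\Delta}+1)\tilde{\Delta}}{L-4\tilde{\Delta}}$, and then the stated denominator after handling $L$ that is not a multiple of $4\tilde{\Delta}$. Your inequality $M_{T_{i-1}} \le \frac{2}{L}\sum_t M_t$ instead uses the minimum as being \emph{small} relative to the $M_t$; but $\sum_t M_t$ is not a quantity the online algorithm provably collects any fixed fraction of (the $M_t$ are maxima over mutually incompatible configurations), so ``charging the reset loss against the reward the online actually collects'' has no support unless you separately prove a harvest bound --- either the paper's floor bound above, or something like $ON_{\mathrm{phase}} \ge \frac{1}{\tilde{\Delta}+1}\sum_{t}M_t$ via collecting the best residue class modulo $\tilde{\Delta}+1$ --- and that missing lemma is exactly the content of the proof. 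Relatedly, your accounting of the numerator as $4\sum_{k=1}^{\tilde{\Delta}}k$ is not where the constant comes from: it is $2\times(\Delta+n)\times(\Delta+n-1)$, i.e.\ (harvest period) times (catch-up length), with the factor $2$ arising because the floor is only guaranteed over half-windows of length roughly $L/2$.

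One observation in your favor: your ``hard part'' paragraph correctly notes that the min-max rule controls only the first catch-up step, while the interior steps $T_{i-1}+1,\ldots,T_{i-1}+\tilde{\Delta}-1$ are not individually bounded by $x_{T_{i-1}}$. This is a real subtlety that the paper's own proof passes over silently when it asserts the per-phase loss bound $(\Delta+n-1)\,x_{T_i}$. But you flag it and defer it rather than resolve it, so even granting you the floor argument, your write-up leaves unproven the very inequality that produces the quadratic numerator; as it stands the proposal establishes neither of the two inequalities whose combination gives the theorem.
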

\begin{proof}
(Sketch) The algorithm gives up reward on at most $\tilde{\Delta}$ steps every $L+1$ steps, however these steps are cleverly selected. Note that $T_i$ is chosen from the interval $[(i) + L/2 + 1], (i)+L]$, which contains steps from both phases $(i)$ and $(i+1)$. Thus, the algorithm gets to peek into phase $(i+1)$ before deciding on the number of steps to be taken in phase $(i)$.   
\end{proof}
\begin{figure*}[t]
\hskip -0.1cm
\begin{subfigure}{0.463\textwidth}
    \begin{tikzpicture}[rotate=-30,scale=.67]
\GraphInit[vstyle=Shade]
\SetVertexNoLabel
\renewcommand*{\VertexBallColor}{red}
\begin{scope}
\grEmptyCycle[RA=2.7,  prefix=a]{1}
\begin{scope}[rotate=120]
\grEmptyCycle[RA=2.7,  prefix=i]{1}
\end{scope}
\renewcommand*{\VertexBallColor}{magenta}
\begin{scope}[rotate=240]
\grEmptyCycle[RA=3,  prefix=j]{1}
\end{scope}
\Edge(j0)(i0)
\Edge(j0)(a0)
\Edge(i0)(a0)
\renewcommand*{\VertexBallColor}{orange}
\renewcommand*{\EdgeColor}{blue}
\grEmptyCycle[RA=1,  prefix=b]{1}
\begin{scope}[rotate=120]
\grEmptyCycle[RA=1,  prefix=k]{1}
\end{scope}
\renewcommand*{\VertexBallColor}{red}
\begin{scope}[rotate=240]
\grEmptyCycle[RA=1,  prefix=l]{1}
\end{scope}
\Edge(b0)(k0)
\Edge(b0)(l0)
\Edge(l0)(k0)
\renewcommand*{\EdgeColor}{red}
\Edge(a0)(b0)
\Edge(i0)(k0)
\Edge(j0)(l0)
\end{scope}

\renewcommand*{\VertexBallColor}{magenta}
\begin{scope}[xshift=3 cm, yshift=5cm]
\grEmptyCycle[RA=3,  prefix=c]{1}
\begin{scope}[rotate=120]
\grEmptyCycle[RA=2.7,  prefix=g]{1}
\end{scope}
\renewcommand*{\VertexBallColor}{green!50!black}
\begin{scope}[rotate=240]
\grEmptyCycle[RA=3,  prefix=h]{1}
\end{scope}

\renewcommand*{\EdgeColor}{blue}
\begin{scope}[rotate=-30, yshift=0.5cm]
\renewcommand*{\VertexBallColor}{red}
\grComplete[RA=0.8,  prefix=d]{2}
\end{scope}
\begin{scope}[yshift=-0.7cm, xshift=-0.5cm]
\renewcommand*{\VertexBallColor}{magenta}
\grComplete[RA=0.1,  prefix=e]{1}
\end{scope}
\renewcommand*{\EdgeColor}{red}
\EdgeIdentity*{c}{d}{0}
\Edge(h0)(e0)
\Edge(c0)(h0)
\Edge(c0)(g0)
\Edge(d1)(g0)
\renewcommand*{\EdgeColor}{black}
\Edge(h0)(g0)
\Edge(c0)(g0)
\Edge(c0)(h0)
\renewcommand*{\EdgeColor}{blue}
\Edge(d0)(e0)
\Edge(d1)(e0)
\node[label={\Large $A$}] (A) at (-1.5,-4) {};
\node[label={\Large $B$}] (B) at (3.1,0.57) {};
\node[label={\Large $C$}] (C) at (-0.2,2.3) {};
\node[label={\Large $C'$}] (C') at (-4.9,-3.5) {};
\node[label={\Large $A'$}] (A') at (-4.7,-7.4) {};
\node[label={\Large $B'$}] (B') at (0.3,-4.5) {};

\node[label={\Large $a$}] (a) at (-0.8,-1.6) {};
\node[label={\Large $b$}] (b) at (1.85,0) {};
\node[label={\Large $c$}] (c) at (-0.3,1.2) {};
\node[label={\Large $a'$}] (a') at (-3.7,-5.7) {};
\node[label={\Large $b'$}] (b') at (-2.2,-4.8) {};
\node[label={\Large $c'$}] (c') at (-3.8,-4.1) {};


\end{scope}
\renewcommand*{\EdgeColor}{cyan}
\EdgeIdentity*{a}{c}{0}
\Edge(i0)(g0)
\Edge(d1)(k0)
\end{tikzpicture}








    \caption{{\bf Deterministic setting}}
    \label{fig:my_label2}
\end{subfigure}
 \hspace*{-0.25cm}
\begin{subfigure}{0.52\textwidth}
    \begin{tikzpicture}[rotate=-30,scale=.65]
\GraphInit[vstyle=Shade]
\SetVertexNoLabel
\begin{scope}
\grEmptyCycle[RA=3,  prefix=a]{1}
\begin{scope}[rotate=120]
\grEmptyCycle[RA=2.7,  prefix=i]{1}
\end{scope}
\begin{scope}[rotate=240]
\renewcommand*{\VertexBallColor}{green!50!black}
\grEmptyCycle[RA=3,  prefix=j]{1}
\end{scope}
\GraphInit[vstyle=Shade]
\Edge(j0)(i0)
\Edge(j0)(a0)
\Edge(i0)(a0)
\renewcommand*{\EdgeColor}{blue}
\grComplete[RA=1,  prefix=b]{3}
\renewcommand*{\EdgeColor}{red}
\EdgeIdentity*{a}{b}{0}
\Edge(i0)(b1)
\Edge(j0)(b2)
\end{scope}

\begin{scope}[xshift=2.6 cm, yshift=5cm]
\grComplete[RA=3,  prefix=c]{3}
\renewcommand*{\EdgeColor}{blue}
\grComplete[RA=1,  prefix=d]{3}
\renewcommand*{\EdgeColor}{red}
\EdgeIdentity*{c}{d}{0,...,2}
\end{scope}
\renewcommand*{\EdgeColor}{cyan}
\Edge(i0)(c1)
\EdgeIdentity*{a}{c}{0}
\EdgeIdentity*{b}{d}{1}

\end{tikzpicture}








    \caption{\bf{Randomized setting}}
    \label{fig:my_label3}
\end{subfigure}
\caption{{\bf Constructions for lower bounds with $\Delta=3$}. ({\bf Left}) $ABC$ and $abc$ are opposite faces of a triangular prism, and $A'B'C'$ and $a'b'c'$ are their translations. The resulting prismatic polytope  has the property that distance between the farthest vertices is $\Delta$.   
Different colors are used for edges on different faces, and same color for translated faces to aid visualization (we have also omitted some edges that connect faces to their translated faces, in order to avoid clutter). A priori the rewards for the $L+1$ steps are same across all vertices (i.e. states). Thus, due to symmetry of the polytope, the online algorithm arbitrarily chooses some vertex (shown here in green). The states that can be reached via shortest paths of same length from this vertex are displayed in same color (magenta, red, or orange). 
The adversary reveals the rewards for an additional, i.e. $(L+2)^{th}$, time step such that states at distance $d \in [\Delta]$ from the green state would fetch $(n+d-1)\alpha$ for some $\alpha$, while the green state would yield 0. Under the Markov dependency rule that a state yields reward only if it has been visited $n$ consecutive times, the online algorithm fails to obtain any reward in the $(L+2)^{th}$ step regardless of the state sequence it traces. The optimal algorithm, due to prescience, gets the maximum possible reward $(n+\Delta-1)\alpha$ for this step. ({\bf Right}) In the randomized setting, all states fetch zero reward at the final step except a randomly chosen state (shown in green) that yields reward $n$. The probability that the randomized online algorithm correctly guesses the green state at the initial time step is exponentially small in $\Delta$. In all other cases, it must forgo this reward, and thus its expected reward is low compared to the optimal algorithm for large $\Delta$.}
\label{fig:manmade}
\end{figure*}
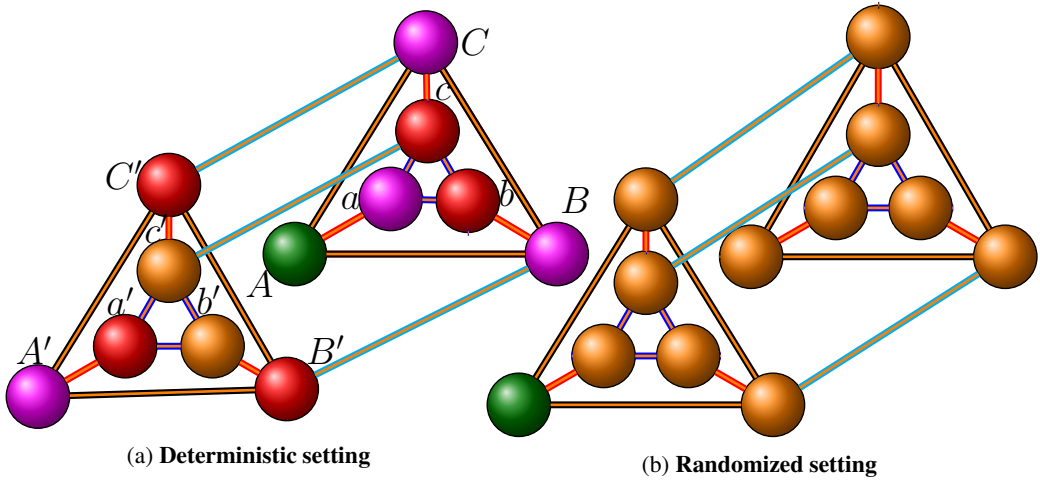

A comparison of Theorem \ref{TheoremEighth} with Theorem \ref{TheoremThird} reveals that Peek Reset provides better upper bounds on the approximation quality
than Peek Search for sufficiently large latency. In particular, for the fully connected first order setting, i.e. $\tilde{\Delta} = 1$, the competitive ratio of Peek Reset is at most $1 + 4/(L-7)$ which is better than the corresponding worst case bound for Peek Search when $L \geq 50$.  
Thus, Peek Search is better suited for applications with severe latency constraints whereas Peek Reset may be preferred in less critical scenarios. 
We now establish that no algorithm, whether deterministic or randomized, can provide significantly better guarantees than our algorithms under latency constraints.
\section{Lower Bounds} \label{LowerBoundsDesc}
We state our lower bounds on the performance of any deterministic and any randomized algorithm in the general non-homogeneous ergodic Markov chain models.  
\begin{thm}\label{TheoremFifth}
The competitive ratio of any deterministic online algorithm on $n^{th}$ order (time-varying) Markov chain models with diameter $\Delta$ for latency $L$ is greater than
$$1 + \dfrac{\tilde{\Delta}}{L} \left(1 + \dfrac{\tilde{\Delta} + L -1}{(\tilde{\Delta}+L-1)^2 + \tilde{\Delta}}\right)~~.$$
In particular, when $n=1$, $\Delta=1$, the ratio is larger than 
$1 + \dfrac{1}{L} + \dfrac{1}{L^2 + 1}$~.
\end{thm}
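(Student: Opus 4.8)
The plan is to prove the bound by exhibiting, for each deterministic online algorithm, a (time-varying) ergodic Markov chain instance on which its competitive ratio exceeds the stated quantity; since competitive analysis is a minimax game against a deterministic player, it suffices to let the adversary fix the instance \emph{after} and \emph{adaptively to} the algorithm's strategy. I would build the instance from the symmetric ``effective-diameter'' gadget of Figure~\ref{fig:manmade}(a): a graph whose farthest vertices are exactly $\Delta$ apart and which is symmetric enough that, before rewards are revealed, all states are indistinguishable. I combine this with the order-$n$ reward rule that a state pays off only after being occupied for $n$ consecutive steps. The two ingredients together make the \emph{effective} transit-plus-settling budget needed to cash in a distant reward equal to $\tilde{\Delta} = \Delta + n - 1$ steps, which is exactly the quantity the bound is organized around.

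The core object is a single ``phase'' of length $\approx L+1$, repeated periodically. First I would present every state with identical revealed rewards for the $L+1$ observations inside the current latency window; by the symmetry of the gadget any deterministic algorithm is blind to which state it sits in and must commit to some vertex $v$, which the adversary precomputes. Only then---just past the latency horizon, so that the decision committing the algorithm to $v$ was genuinely made without sight of it---does the adversary unveil a later-step reward that pays a state at distance $d\in[\Delta]$ from $v$ an amount $(n+d-1)\alpha$ and pays $v$ nothing, so the maximal payoff $(n+\Delta-1)\alpha = \tilde{\Delta}\alpha$ sits at distance $\Delta$. Reaching that state and occupying it for the $n$ consecutive steps the rule demands costs $\tilde{\Delta}$ steps no longer available after commitment, so the online algorithm forfeits the reward while the prescient offline optimum heads for it from the start and collects $\tilde{\Delta}\alpha$. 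Chaining phases makes $OPT$ and $ON$ sums of identical per-phase contributions, so $\rho$ equals the per-phase ratio in the limit of many phases and the boundary terms wash out.

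To obtain the exact bound rather than only the leading $1 + \tilde{\Delta}/L$, I would keep the base per-step reward and the punishing amount $\alpha$ as free parameters, together with the precise step at which the punishment is placed within the phase, write $\rho$ as a function of these, and optimize. The leading $1+\tilde{\Delta}/L$ reflects forfeiting $\tilde{\Delta}$ units out of roughly $L$; the sharper correction $\dfrac{\tilde{\Delta}+L-1}{(\tilde{\Delta}+L-1)^2+\tilde{\Delta}}$ should emerge from balancing the base reward the online algorithm must still accrue against the forfeited $\tilde{\Delta}\alpha$, the quadratic $(\tilde{\Delta}+L-1)^2$ in the denominator being the signature of this second-order trade-off. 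Specializing to $n=\Delta=1$, i.e. $\tilde{\Delta}=1$, should collapse the expression to $1 + \tfrac1L + \tfrac1{L^2+1}$, a useful consistency check.

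The main obstacle is twofold. The first difficulty is the forcing step for \emph{deterministic} algorithms: I must argue that \emph{no} deterministic strategy evades the penalty, which needs the adversary to choose the punished target adaptively from the precomputed committed trajectory and to use the gadget's symmetry so that every state is equally a worst case before the reveal; because the algorithm sees $L$ steps ahead and could otherwise start maneuvering early, the reveal must be timed exactly at the latency boundary, and this timing argument is where the construction is delicate. The second difficulty is the tight (and \emph{strict}) arithmetic: extracting the precise lower-order term rather than a loose $\Theta(\tilde{\Delta}/L)$ estimate demands a careful optimization over the free reward parameters, and it is this optimization---not the combinatorial gadget---where the exact denominator $(\tilde{\Delta}+L-1)^2+\tilde{\Delta}$ is earned and the strict inequality is justified.
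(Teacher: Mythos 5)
Your gadget, reveal timing, and distance-graded rewards $(n+d-1)\alpha$ do match the paper's construction, but the core step of your argument --- that after the reveal the online algorithm simply \emph{forfeits} the distant reward --- is precisely what needs proof, and as stated it is false. A deterministic algorithm need not forfeit: it can chase the bonus, and in particular it can \emph{delay} the chase until exactly $\tilde{\Delta}$ steps before the bonus column, losing only $\tilde{\Delta}$ steps of base reward while gaining $\tilde{\Delta}\alpha$; whenever $\alpha$ exceeds the base rate this is strictly profitable, so your single free parameter cannot simultaneously make forfeiting costly and chasing unprofitable. The paper handles this with two devices you are missing. First, the bonus is pinned to the base reward ($\alpha = a$, i.e.\ bonus $\tilde{\Delta}a$ versus $\tilde{\Delta}$ lost steps worth $a$ each), making the delayed chase exactly neutral. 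Second --- and this is the essential missing idea --- the adversary makes a \emph{second} adaptive reveal (a final column, fixed only after seeing how the algorithm reacts to the first reveal): if the algorithm stays, the final column is all zeros; if it moves immediately (which for the optimizing $a>1$ strictly beats staying), the final column pays a third state and zeroes the mover's. This yields two branch ratios, $r_1 = 1 + \tilde{\Delta}a/(1+(L-1)a)$ for stayers and $r_2 = 1 + (1+\tilde{\Delta}a)/(La)$ for movers, and the bound is $\min\{r_1,r_2\}$ with $a$ chosen to balance them, i.e.\ solving $\tilde{\Delta}a^2 = 1 + (\tilde{\Delta}+L-1)a$. That quadratic, not the ``base reward versus forfeited $\tilde{\Delta}\alpha$'' trade-off you describe, is where the denominator $(\tilde{\Delta}+L-1)^2 + \tilde{\Delta}$ comes from.

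The gap is not merely about sharpening constants: a one-reveal-per-phase construction like yours provably cannot reach the stated bound. Take $n=\Delta=1$ and optimize over your free parameters (base $a$, bonus $\alpha$, placement); the algorithm's best response among \{stay, move immediately, move one step before the bonus\} always achieves ratio at most $1+1/L$, strictly short of the claimed $1 + 1/L + 1/(L^2+1)$ --- the extra $1/(L^2+1)$ is earned only by the second reveal and the two-case balance. Two smaller points: chaining phases is unnecessary (the paper's instance is a single phase of length about $L+3$) and would in fact undermine the symmetry/indistinguishability you invoke, since in later phases the algorithm's location and the adversary's pattern are known; and the strictness of the inequality in the theorem comes from the explicit estimate $a < (\tilde{\Delta}+L-1)/\tilde{\Delta} + 1/(\tilde{\Delta}+L-1)$ applied to the balanced ratio, another piece your sketch would need to supply.
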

\begin{proof}(Sketch) The proof revolves around an intricate $\Delta$-dimensional prismatic polytope construction, where each vertex corresponds to a state. In particular, the proof hinges on disentangling the interplay between $L$, $\Delta$, and $n$. The visual intuition is sketched in Fig. \ref{fig:manmade}.  
\end{proof}
\begin{thm} \label{TheoremSixth}
For any $\epsilon > 0$, the competitive ratio of any randomized online algorithm, that is allowed latency $L$, on $n^{th}$ order (time-varying) Markov chain models with $\Delta = 1$ is at least $1 + \dfrac{(1-\epsilon)n}{L+\epsilon n}~.$
For a general diameter $\Delta$, the competitive ratio is at least 
$ 1 + \dfrac{\left(2^{\Delta-1} \lceil 1/\epsilon \rceil - 1 \right)n}{2^{\Delta-1} \lceil 1/\epsilon \rceil L + n}~.$
\end{thm}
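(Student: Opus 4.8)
The plan is to prove both bounds through Yao's minimax principle, reducing the randomized lower bound to exhibiting a single distribution $\mathcal D$ over (time-varying) reward sequences against which no \emph{deterministic} online algorithm can do well. First I would record the standard inequality that for any randomized online algorithm $\mathcal R$ with competitive ratio $\rho_{\mathcal R}$,
\[
\rho_{\mathcal R} ~\geq~ \frac{\mathbb E_{\mathcal D}[OPT]}{\max_{\text{deterministic } A}\,\mathbb E_{\mathcal D}[ON_A]},
\]
which holds because $\mathbb E[ON_{\mathcal R}(I)] \geq OPT(I)/\rho_{\mathcal R}$ for every input $I$, and averaging this over $\mathcal D$ with the coins of $\mathcal R$ frozen turns $\mathcal R$ into a deterministic algorithm and hence bounds the left side by $\max_A \mathbb E_{\mathcal D}[ON_A]$. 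It then suffices to design $\mathcal D$ and control the two expectations.

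The construction, sketched in Fig.~\ref{fig:manmade}(b), partitions the horizon into $P$ identical phases. In each phase both algorithms can collect a common \emph{baseline} of exactly $L$ units, while an additional \emph{bonus} of value $n$ is attached to a single \emph{special} state $s^*$ drawn uniformly from a set of $m$ mutually confusable candidates; under the order-$n$ rule this bonus is granted only to a path occupying $s^*$ for $n$ consecutive steps. For $\Delta=1$ I would take $m=\lceil 1/\epsilon\rceil$ candidates directly; for general $\Delta$ I would use the prismatic polytope to pack $m = 2^{\Delta-1}\lceil 1/\epsilon\rceil$ candidates that are mutually at distance up to $\Delta$, so that committing to reach one genuinely precludes reaching the others. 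The rewards are scheduled so that the information distinguishing $s^*$ from the remaining candidates enters the latency-$L$ window only \emph{after} the algorithm has been forced to fix the $n$ states of its lock-in; since past labels cannot be revised, its committed target is necessarily independent of $s^*$.

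Given the construction, the two expectations are immediate. The offline optimum is prescient, positions itself at $s^*$ for the required $n$ steps, and collects $L+n$ per phase, so $\mathbb E_{\mathcal D}[OPT] = P(L+n)$. Any deterministic $A$ collects the baseline $L$ but, by the information-hiding property and the symmetry of the $m$ equiprobable candidates, its fixed commitment coincides with $s^*$ with probability at most $1/m$ in each phase; hence $\max_A \mathbb E_{\mathcal D}[ON_A] \leq P(L + n/m)$. Substituting into Yao's inequality gives
\[
\rho ~\geq~ \frac{L+n}{\,L + n/m\,} ~=~ \frac{m(L+n)}{mL+n} ~=~ 1 + \frac{(m-1)n}{mL+n}.
\]
Taking $m=\lceil 1/\epsilon\rceil$ (and using $1/\epsilon$ in place of the ceiling for the clean $\Delta=1$ form) yields $1 + \frac{(1-\epsilon)n}{L+\epsilon n}$, while $m = 2^{\Delta-1}\lceil 1/\epsilon\rceil$ yields the general-$\Delta$ bound.

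I expect the main obstacle to be the timing-and-geometry step: designing the reward schedule on the prismatic polytope so that the distinguishing information is provably outside the latency window at the exact moment the algorithm must commit to its $n$-step lock-in, \emph{simultaneously} for all values of $L$, while keeping the $m = 2^{\Delta-1}\lceil 1/\epsilon\rceil$ candidates both symmetric enough that the guess probability is exactly $1/m$ and mutually exclusive under diameter $\Delta$. The delicate balance is to make the baseline equal exactly $L$ (so the denominator reads $mL+n$) while the order-$n$ dependence and the $\Delta$-step traversal cost, set against the phase length, leave no deterministic strategy able to hedge across more than one candidate.
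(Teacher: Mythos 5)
Your overall architecture --- a uniformly random ``special'' state carrying a bonus of $n$, a common baseline of $L$, a prismatic polytope with $2^{\Delta-1}\lceil 1/\epsilon\rceil$ vertices for general $\Delta$, and an averaging (Yao-type) reduction to deterministic algorithms --- is essentially the paper's construction, and your final arithmetic $\frac{L+n}{L+n/m} = 1+\frac{(m-1)n}{mL+n}$ matches the paper's. (The paper does not invoke Yao by name; it randomizes the input --- first column all zeros, then $L$ columns of ones, then a last column that is zero except for the value $n$ in one uniformly random row --- and bounds the expected reward of any online algorithm directly, which is the same averaging argument you formalize.)

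However, the step you defer as ``the main obstacle'' is precisely the step that carries the proof, and the mechanism you propose for it cannot work. You justify $\max_A \mathbb{E}_{\mathcal{D}}[ON_A] \le P(L+n/m)$ by saying the information identifying $s^*$ enters the latency-$L$ window only \emph{after} the algorithm has been forced to fix its $n$-step lock-in. With $\Delta=1$ this is impossible whenever $L \ge n$: the bonus sits in the final column of the instance, so it enters the window $L$ steps before the horizon ends, leaving any online algorithm ample time to move to $s^*$ and occupy it for the required $n$ consecutive steps. Information hiding alone therefore cannot cap non-committed algorithms at $L$. What actually makes the cap hold in the paper is a calibration argument: the bonus is set to \emph{exactly} $n$, and the order-$n$ consecutive-occupancy rule means that any algorithm switching into $s^*$ after the reveal receives zero on exactly $n$ steps that would otherwise have paid unit (or bonus) reward; so every bonus-chasing strategy totals exactly $L$, the same as never switching, and only an algorithm that guessed $s^*$ at time $0$ (probability $1/m$) attains $L+n$. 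The same cancellation is what handles general $\Delta$, where switching costs at least $n$ steps (traversal plus lock-in) against a bonus of $n$. Your proposal gestures at this ``delicate balance'' but never establishes it, so as written the bound on $\max_A \mathbb{E}_{\mathcal{D}}[ON_A]$ --- and hence the theorem --- is unproven; closing the gap requires replacing the information-hiding rationale with the exact-cancellation argument above.
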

\begin{proof}(Sketch)
The proof relies on another prismatic polytope construction. We make it hard to guess a randomly chosen state that fetches high reward. Fig. \ref{fig:manmade} elucidates the setting $\Delta=3$. 
\end{proof}
We now analyze the performance of our algorithms in the wake of these lower bounds. Note that when $\tilde{\Delta} = 1$, Randomized Peek Search (Theorem \ref{TheoremSeventh})
 matches the lower bound  in Theorem \ref{TheoremSixth} even with $L=1$, since we may set $\epsilon$ arbitrarily close to 0.
 Similarly, in the deterministic setting, Peek Search  
 achieves a competitive ratio of 4 with $L=1$ (Theorem \ref{Theorem3}), which is within twice the theoretically best possible performance (i.e. a ratio of 2.5) as specified by Theorem \ref{Theorem5}.
 Moreover, the performance approaches  the lower bound with increase in $L$ as Peek Reset takes center stage (Theorem  \ref{TheoremEighth}).  In the general setting, our algorithms are almost optimal when $L$ is sufficiently large compared to  $\tilde{\Delta}$. We proceed to our empirical findings that accentuate the practical implications of our work.   
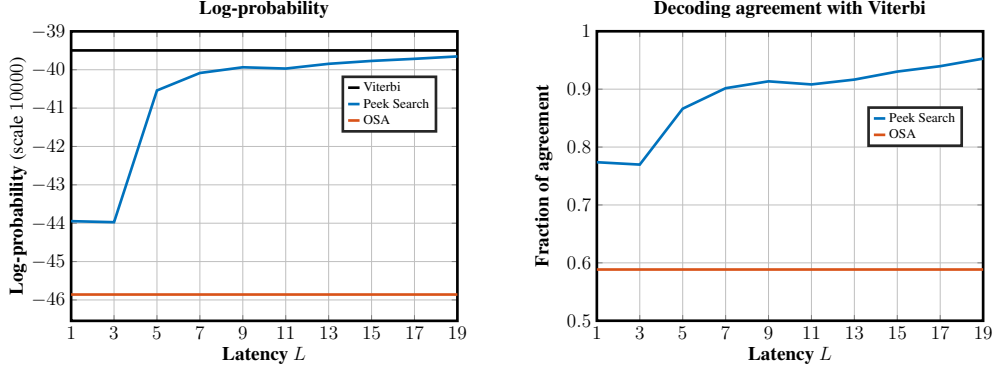
\begin{figure*}[t]
    \begin{subfigure}{0.45\linewidth}
    \centering
    \resizebox{\textwidth}{!}{\definecolor{mycolor1}{rgb}{0.00000,0.44700,0.74100}%
 \definecolor{mycolor2}{rgb}{0.85000,0.32500,0.09800} 
 \begin{tikzpicture} 
 \begin{axis}[%
 width= \figW, 
 height= \figH, 
 at={(1.011in,0.642in)}, 
 scale only axis, 
 xmin=1,  
 xmax=19, 
 ymax=-39,
 xlabel={{\bf Latency $L$}},  
 ytick = {-47, -46, -45, -44, -43, -42, -41, -40, -39},
 xtick = {1, 3, 5, 7, 9, 11, 13, 15, 17, 19},
 yticklabel style={/pgf/number format/precision=3}, 
 line width=2pt,  
 grid=both,   
 grid style={line width=.3pt, draw=gray!10},  
 major grid style={line width=.2pt,draw=gray!50}, 
 ylabel={{\bf Log-probability} (scale $10000$)},   
 axis background/.style={fill=white}, 
 title style= {font=\Large}, 
 title={{\bf Log-probability}}, 
 legend style={at={(0.7, 0.65)},
 anchor= south west,legend cell align=left,align=left,draw=white!15!black}, 
 xlabel style={font=\Large},ylabel style={font=\Large}, ticklabel style={font=\Large},y label style={at={(axis description cs:-0.1,0.5)}},x label style={at={(axis description cs:0.5, -0.07)}},legend style={legend cell align=left,align=left,draw=white!15!black},scaled y ticks = false, y tick label style={/pgf/number format/fixed}, legend image post style={scale=.5} 
 ] 
 
 \addlegendentry{Viterbi}; 
\addplot [color=black,solid] table[row sep=crcr]{
1 -39.49659303353572\\
3 -39.49659303353572\\
5 -39.49659303353572\\
7 -39.49659303353572\\
9 -39.49659303353572\\
11 -39.49659303353572\\
13 -39.49659303353572\\
15 -39.49659303353572\\
17 -39.49659303353572\\
19 -39.49659303353572\\
};
 
\addlegendentry{Peek Search}; 
\addplot [color=mycolor1,solid] table[row sep=crcr]{
1  -43.946693546741706 \\
 3  -43.97371100589116 \\
 5 -40.540262010629085 \\
 7 -40.084968164805794 \\
 9 -39.93593788719113 \\
 11 -39.96947072954371\\
 13 -39.84592857565983\\
 15 -39.770266382394196\\
 17 -39.715738794662704\\
 19 -39.65320202389515\\
};

\addlegendentry{OSA};
\addplot [color=mycolor2,solid] table[row sep=crcr]{
1 -45.8590743915589\\
3 -45.8590743915589\\
5 -45.8590743915589\\
7 -45.8590743915589\\
9 -45.8590743915589\\
11 -45.8590743915589\\
13 -45.8590743915589\\
15 -45.8590743915589\\
17 -45.8590743915589\\
19 -45.8590743915589\\
};

\end{axis}
\end{tikzpicture}}
        \label{fig:inclu}
    \end{subfigure}%
    \qquad
    \begin{subfigure}{0.45\linewidth}
    \centering
    \resizebox{\textwidth}{!}{\definecolor{mycolor1}{rgb}{0.00000,0.44700,0.74100}%
 \definecolor{mycolor2}{rgb}{0.85000,0.32500,0.09800} 
 \begin{tikzpicture} 
 \begin{axis}[%
 width= \figW, 
 height= \figH, 
 at={(1.011in,0.642in)}, 
 scale only axis, 
 xmin=1,  
 xmax=19,  
 xlabel={{\bf Latency $L$}},  
 ymin=0.5, 
 ymax=1.0,
 ytick = {0.5, 0.6, 0.7, 0.8, 0.9, 1},
 xtick = {1, 3, 5, 7, 9, 11, 13, 15, 17, 19},
 yticklabel style={/pgf/number format/precision=3}, 
 line width=2pt,  
 grid=both,   
 grid style={line width=.3pt, draw=gray!10},  
 major grid style={line width=.2pt,draw=gray!50}, 
 ylabel={{\bf Fraction of agreement}},   
 axis background/.style={fill=white}, 
 title style= {font=\Large}, 
 title={{\bf Decoding agreement with Viterbi}}, 
 legend style={at={(0.7, 0.6)},
 anchor= south west,legend cell align=left,align=left,draw=white!15!black}, 
 xlabel style={font=\Large},ylabel style={font=\Large}, ticklabel style={font=\Large},y label style={at={(axis description cs:-0.1,0.5)}},x label style={at={(axis description cs:0.5, -0.07)}},legend style={legend cell align=left,align=left,draw=white!15!black},scaled y ticks = false, y tick label style={/pgf/number format/fixed}, legend image post style={scale=.5} 
 ] 
 
\addlegendentry{Peek Search}; 
\addplot [color=mycolor1,solid] table[row sep=crcr]{
1 0.7737957348231927\\
3 0.7697213326974177\\
5 0.8662533215234721\\
7 0.9016556516999387\\
9 0.9135109354772774\\
11 0.9081692443959938\\
13 0.9165360768549431\\
15 0.9303808680248007\\
17 0.9396879471281597\\
19 0.9527423860461948\\
};

\addlegendentry{OSA};
\addplot [color=mycolor2,solid] table[row sep=crcr]{
1 0.5884853852967228\\
3 0.5884853852967228\\
5 0.5884853852967228\\
7 0.5884853852967228\\
9 0.5884853852967228\\
11 0.5884853852967228\\
13 0.5884853852967228\\
15 0.5884853852967228\\
17 0.5884853852967228\\
19 0.5884853852967228\\
};

\end{axis}
\end{tikzpicture}}
        \label{fig:deform}
    \end{subfigure}
    \vskip -0.2in
    \caption{{\bf Evaluation of performance on genome sequence data}. The data consists of 73385 sites, each of which is to be labeled with one of the four states. The log-probability values on the right have been scaled down by a factor of $10^4$ to avoid clutter near the vertical axis. Peek Search achieves almost optimal performance with a latency of only about $20$, which is over three orders of magnitude less than the optimal Viterbi algorithm. The corresponding predictions agreed with the Viterbi algorithm on more than $95\%$ of all sites. In contrast, OSA was found to be significantly suboptimal. 
    \label{fig:genome}}
    \end{figure*}
\section{Experiments} \label{Experiments}
We now describe the results of our experiments that elucidate the practical benefits of our algorithms under low latency constraints.  Specifically, we compare the performance of Peek Search with the state-of-the-art Online Step Algorithm (OSA) \cite{NVS2006} that also provides theoretical guarantees for first order Markov decoding under latency constraints.  OSA hinges on a strong assumption that uncertainty in any state label decreases with increase in latency.  We found that this assumption does not hold in the context of an important application, namely, genome decoding. In contrast, since our algorithms do not make any such assumptions, Peek Search achieves near-optimal performance as expected. 
We experimented with the Glycerol TraSH genome data \cite{DI2013} pertaining to M. tuberculosis transposon mutants. Our task was to label each of the 73385 gene sites with one of the four states, namely essential (ES), growth-defect (GD), non-essential (NE), and growth-advantage (GA). These states represent different categories of gene {\em essentiality}  depending on their read-counts (i.e. emissions), and  the labeling task is crucial toward identifying  potential drug targets for antimicrobial treatment \cite{DI2013}. We used the parameter settings suggested by \cite{DI2013} for decoding with an HMM.  

Note that for this problem, the Viterbi algorithm and heuristics such as beam search need to compute the optimal paths of length equal to the number of sites, i.e. in excess of 73000, thereby incurring very high latency. However, as Fig.  \ref{fig:genome} shows, Peek Search achieved near-optimal log-probability
(the Viterbi objective in \eqref{Viterbi})
with a latency of only about 20, which is less than that of Viterbi by a factor in excess of 3500. Moreover, the state sequence output by Peek Search agreed with the Viterbi labels on more than 95\% of the sites. We observe that, barring downward blips from $L=1$ to $L=3$ and from $L=9$ to $L=11$, the performance improved with $L$. As expected, for all $L$, including those featuring in the blips, the log-probability values were verified to be consistent with our theoretical guarantees. In contrast, we found OSA to be significantly suboptimal in terms of both log-probability and label agreement.\footnote{On a different task, namely, part-of-speech (POS) tagging for sentences in the standard Brown corpus data, we found that both Peek Search and OSA achieved almost optimal performance even with $L=1$.} In particular, OSA  agreed with the optimal algorithm (Viterbi) on only 58.8\% of predictions under both entropy and  expected classification error measures suggested in  \cite{NVS2006}. In fact, just with $L=1$, Peek Search matched with Viterbi on $77.4\%$ predictions thereby outperforming OSA by an overwhelming amount (over $30\%$). Note that Peek Search allows specifying $L$, and thus adapts to the latency directly. However, OSA does not provide a direct control over the latency $L$, and may require extensive tuning\footnote{We varied $\lambda \in \{10^{-4}, 10^{-1}, \ldots, 10^{4}\}$ under both the entropy and the expected classification error measures suggested by \cite{NVS2006} to tune for $L$ (as noted in \cite{NVS2006}, large values of $\lambda$
penalize latency). However, the performance of OSA (as shown in Fig. \ref{fig:genome}) did not improve with increase in $L$.}
of a hyperparameter $\lambda$ to achieve a good trade-off between latency and accuracy. Our empirical findings underscore the promise of our algorithms  toward expediting scientific progress in fields like drug discovery that rely on large-scale genome sequencing.   
\bibliography{main}
\bibliographystyle{unsrt}

\clearpage
\appendix
\section*{Supplementary Material}
We now provide detailed proofs of all the theorems stated in the main text. 

For improved readability,  instead of proving Theorem \ref{Theorem3} immediately, we start with two simpler settings, namely, (a) first order fully connected, and (b) $n^{th}$ order fully connected.  Together with Theorem \ref{TheoremThird}, these results will help segregate the effect of $n$ from that of $\Delta$ on the competitive ratio. 
 
\section{First order chain models with $\Delta = 1$}
\begin{lemma} \label{Theorem1}
The competitive ratio of Peek Search on first order Markov chain models with $\Delta =1$ for $L \geq 1$ is
\begin{eqnarray*} \rho & \leq & \left(1 + \dfrac{1}{L}\right)
\sqrt[\leftroot{-3}\uproot{1} \scriptstyle L ]{L+1}~.
\end{eqnarray*}
\end{lemma}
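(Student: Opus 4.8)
The plan is to run an amortized analysis built around the discounted value of the path that Peek Search holds at each step. Specializing to $n=1,\Delta=1$, let $g_i = R(\hat y_i \mid \hat y_{i-1})$ be the reward the online algorithm banks at step $i$, let $r^*_k = R(y^*_k \mid y^*_{k-1})$ be the optimal reward at step $k$ (so $OPT=\sum_k r^*_k$ and $ON=\sum_i g_i$), and let $V_i$ denote the maximum $\gamma$-discounted reward over length-$(L+1)$ paths from $\hat y_{i-1}$ that Peek Search computes at time $i$. The entire argument rests on two inequalities, each exploiting that Peek Search maximizes the discounted objective and that the graph is fully connected.

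First I would establish the \emph{persistence} recursion $\gamma V_{i+1} \ge V_i - g_i$. At time $i+1$ the algorithm sits at $\hat y_i$, the first node of the path realizing $V_i$; the remaining $L$ nodes of that path form a feasible continuation, and re-indexing shifts every discount factor $\gamma^{\ell}\mapsto\gamma^{\ell-1}$, so this continuation is worth $(V_i-g_i)/\gamma$ up to one appended edge of nonnegative reward. Since $V_{i+1}$ is the discounted optimum, $V_{i+1}\ge (V_i-g_i)/\gamma$. Second, the \emph{jump-to-optimal} bound: because $\Delta=1$, at time $i$ the algorithm may in one step move to $y^*_i$ and then trace the Viterbi path, so after discarding the nonnegative first edge, $V_i \ge \sum_{j=1}^{\min(L,\,T-i)}\gamma^{j}\, r^*_{i+j}$.

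Next I would sum the recursion, rewritten as $g_i \ge V_i-\gamma V_{i+1}$, over $i=1,\dots,T$ with $V_{T+1}=0$, which telescopes to $ON \ge V_1 + (1-\gamma)\sum_{i=2}^{T}V_i$. Substituting the jump-to-optimal bound and collecting the coefficient of each $r^*_k$ is the crux. In the bulk, $L+2 \le k \le T$, the index $i$ ranges over all $L$ values $k-L,\dots,k-1$, yielding coefficient $(1-\gamma)\sum_{m=1}^{L}\gamma^{m}=\gamma(1-\gamma^{L})$; the truncation $j\le T-i$ makes the end of the sequence covered identically, while the stand-alone $V_1$ term combines with the short initial sums so that each early $r^*_k$ ($2\le k\le L+1$) receives total weight $\gamma^{k-1}+(\gamma-\gamma^{k-1})=\gamma\ge\gamma(1-\gamma^{L})$. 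Hence every $r^*_k$ with $k\ge 2$ is weighted by at least $\gamma(1-\gamma^{L})$, only the lone first-step reward $r^*_1$ is forgone, and $ON \ge \gamma(1-\gamma^{L})\,OPT$ up to that single additive term, giving $\rho \le 1/\big(\gamma(1-\gamma^{L})\big)$.

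Finally I would optimize the free parameter: maximizing $\gamma-\gamma^{L+1}$ over $(0,1)$ gives $1-(L+1)\gamma^{L}=0$, i.e. $\gamma=(L+1)^{-1/L}\in(0,1)$, at which $\gamma(1-\gamma^{L}) = (L+1)^{-1/L}\cdot\frac{L}{L+1}$, so $\rho \le (1+\tfrac1L)\sqrt[L]{L+1}$, matching the claim (and agreeing with the $n=\Delta=1$ specialization of Theorem~\ref{TheoremThird}). I expect the main obstacle to be the boundary bookkeeping in the coefficient count: verifying that the leftover $V_1$ contribution exactly offsets the deficit of the short initial windows so that no $r^*_k$ with $k\ge 2$ is undercounted, and arguing that forgoing the single reward $r^*_1$ does not degrade the multiplicative ratio. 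The two structural inequalities are short; it is the marriage of the telescope with the per-step optimal lower bound that requires care.
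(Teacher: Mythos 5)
Your proposal is correct and is essentially the paper's own proof in different packaging: your persistence recursion $g_i \ge V_i - \gamma V_{i+1}$ is precisely the paper's ``continue the path chosen at the previous step'' inequality, your jump-to-optimal bound is its ``switch to the Viterbi segment'' inequality, and combining them (equivalently, with weights $1-\gamma$ and $\gamma$), telescoping over $i$, and setting $\gamma = (L+1)^{-1/L}$ is the identical algebra. The one loose end you flag---the forgone reward $r_1^*$---is dispatched in the paper by a WLOG zero-padding assumption (the first and last $L+1$ steps are taken to carry zero reward, via dummy tokens), and can alternatively be closed by noting that at $i=1$ the online algorithm and the optimum share the same start state, so your jump bound at $i=1$ may retain $r_1^*$ with coefficient $1$ instead of discarding it.
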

\begin{proof}
Recall that at each time step $i$, our online algorithm solves the following optimization problem over variables $y \triangleq (y_i, y_{i+1}, \ldots, y_{i+L}) \in S(i, L)$, i.e. the set of valid paths of length $L + 1$ that emanate from the state at time $i$:
$$M_i = \arg\!\!\!\!\max_{y \in S(i, L)} R\left(y_i | \hat{y}_{i-1} \right) + \sum_{j=1}^L \gamma^j R(y_{i+j}|y_{i+j-1}).$$
Note that the set $M$ may have more than one path that maximizes the discounted sum. Breaking ties arbitrarily, let the online algorithm choose $\tilde{y}_i \triangleq (\hat{y}_i, \tilde{y}_i^1, \ldots, \tilde{y}_i^L) \in M_i$ (and reach the state $\hat{y}_i$). Let $\{y_{t}^* ~|~ t \in [T]\}$ be the optimal path over the entire horizon. Since $\Delta = 1$, one of the candidate paths considered by the online algorithm is the optimal segment $(y_i^*, y_{i+1}^*, \ldots, y_{i+L}^*)$. Since $\tilde{y}_i \in M_i$, we must have
\begin{eqnarray} & & R(\hat{y}_i|\hat{y}_{i-1}) ~+~ \gamma R(\tilde{y}_i^1|\hat{y}_{i}) ~+~ \sum_{j=2}^L \gamma^j R(\tilde{y}_{i}^j|\tilde{y}_{i}^{j-1}) \nonumber   \\  
& ~\geq~ & R(y^*_i|\hat{y}_{i-1}) ~+~ \sum_{j=1}^L \gamma^j R(y^*_{i+j}|y^*_{i + j-1}) \nonumber \\ 
& ~\geq~ & \sum_{j=1}^L \gamma^j R(y^*_{i+j}|y^*_{i + j-1})~, \label{eq1}
\end{eqnarray}
where the last inequality follows since all rewards are non-negative, and thus in particular, $R(y^*_i|\hat{y}_{i-1}) \geq 0$.

An alternate path considered by the online algorithm is $(\tilde{y}_{i-1}^1, \ldots, \tilde{y}_{i-1}^L, \bar{y}_{i-1}^{L+1})$, where $(\tilde{y}_{i-1}^1, \ldots, \tilde{y}_{i-1}^L)$ are the last $L$ steps of the path $\tilde{y}_{i-1} \in M_{i-1}$ (i.e. the path chosen at time $i-1$) and $\bar{y}_{i-1}^{L+1}$ is an arbitrary valid transition from state $\tilde{y}_{i-1}^L$. Again since this transition fetches a non-negative reward, we must have
\begin{eqnarray} & & R(\hat{y}_i|\hat{y}_{i-1}) ~+~ \gamma R(\tilde{y}_i^1|\hat{y}_{i}) ~+~ \sum_{j=2}^L \gamma^j R(\tilde{y}_{i}^j|\tilde{y}_{i}^{j-1}) \nonumber   \\  
& ~\geq~ & R(\tilde{y}_{i-1}^1|\hat{y}_{i-1}) ~+~ \sum_{j=1}^{L-1} \gamma^j R(\tilde{y}_{i-1}^{j+1}|\tilde{y}_{i-1}^j)~. \label{eq2} 
\end{eqnarray}
Multiplying \eqref{eq1} by $1-\gamma$ and \eqref{eq2} by $\gamma$, and adding the resulting inequalities, we get
\begin{eqnarray}
& & R(\hat{y}_i|\hat{y}_{i-1}) ~+~ \gamma R(\tilde{y}_i^1|\hat{y}_{i}) ~+~ \sum_{j=2}^L \gamma^j R(\tilde{y}_{i}^j|\tilde{y}_{i}^{j-1}) \nonumber   \\  
& ~\geq~ & \sum_{j=1}^L (1 - \gamma) \gamma^j R(y^*_{i+j}|y^*_{i + j-1}) ~+~ \nonumber 
 ~+~  \gamma R(\tilde{y}_{i-1}^1|\hat{y}_{i-1}) 
  ~+~  \sum_{j=1}^{L-1} \gamma^{j+1}  R(\tilde{y}_{i-1}^{j+1}|\tilde{y}_{i-1}^j) \nonumber \\
& = & \sum_{j=1}^L (1 - \gamma) \gamma^j R(y^*_{i+j}|y^*_{i + j-1}) \nonumber 
 ~+~  \gamma R(\tilde{y}_{i-1}^1|\hat{y}_{i-1}) ~+~ \sum_{k=2}^{L} \gamma^{k} R(\tilde{y}_{i-1}^{k}|\tilde{y}_{i-1}^{k-1}),
\label{eq3}
\end{eqnarray}
where the last inequality follows from a change of variable, namely, $k = j+1$. 
Summing across all time steps $i$, 
\begin{eqnarray*}
\sum_i R(\hat{y}_i|\hat{y}_{i-1}) + \underbrace{\sum_i \left(\gamma R(\tilde{y}_i^1|\hat{y}_{i}) ~+~ \sum_{j=2}^L \gamma^j R(\tilde{y}_{i}^j|\tilde{y}_{i}^{j-1}) \right)}_{DR1} \nonumber
\end{eqnarray*}
\begin{eqnarray}
& ~\geq~ & \sum_i \sum_{j=1}^L (1 - \gamma) \gamma^j R(y^*_{i+j}|y^*_{i+j-1}) \nonumber
 ~+~  \underbrace{\sum_i \left(\gamma R(\tilde{y}_{i-1}^1|\hat{y}_{i-1}) ~+~ \sum_{j=2}^{L} \gamma^{j} R(\tilde{y}_{i-1}^{j}|\tilde{y}_{i-1}^{j-1})\right)}_{DR2} ~. \nonumber
\end{eqnarray}
Without loss of generality, we can assume that all transitions between states in the first $L+1$ time steps and the last $L+1$ steps fetch zero reward.\footnote{One way to accomplish this is by adding a sequence of  $L+1$ dummy tokens at the beginning and another sequence at the end of the input to be decoded. Alternatively, we can introduce a dummy start state that transitions to itself $L$ times with zero reward and produces a fake output in each transition, and then makes a zero reward transition into the true start state, whence actual decoding happens for $T$ steps followed by repeated transitions into a dummy end state that again fetches zero reward).} Now note that both $DR1$ and $DR2$ consist of terms that pertain to leftover discounted rewards on optimal $(L+1)$-paths computed by Peek Search (recall we take only the first step on each such path). In fact, the terms are common to both sides except for those that fall in $(L+1)$-length windows at the beginning or the end. Since first and last $(L+1)$ steps fetch zero reward, we can safely disregard these windows. Thus, by telescoping over $i$, we have
$$\sum_i R(\hat{y}_i|\hat{y}_{i-1}) ~~\geq~~ \sum_i \sum_{j=1}^L (1 - \gamma) \gamma^j R(y^*_{i+j}|y^*_{i+j-1})~.$$
Defining a variable $s = i+j$, and interchanging the two sums, we note that the right side becomes 
\begin{eqnarray*}
(1 - \gamma) \sum_{j=1}^L \gamma^j \sum_{s} R(y^*_s|y^*_{s-1}).    
\end{eqnarray*}
That is, every reward subsequent to $L+1$ steps appears with discounts $\gamma, \gamma^2, \ldots, \gamma^L$. Summing the geometric series, we note that the ratio of the total reward obtained by the optimal offline algorithm to that by the online algorithm, i.e. the competitive ratio $\rho$ is at most $\gamma^{-1} (1 - \gamma^L)^{-1}$. The result follows by setting $\gamma = \sqrt[\leftroot{-3}\uproot{1} \scriptstyle L ]{1/(L+1)}$. 
\end{proof}

\section{$n^{th}$ order chain models with $\Delta = 1$}
\begin{lemma} \label{Theorem2}
The competitive ratio of Peek Search on Markov chain models of order $n$ with $\Delta = 1$ for $L \geq n$ is
\begin{eqnarray*} \rho & ~\leq~   \dfrac{L+1}{L-n+1} \left(\dfrac{L+1}{n}\right)^{n/(L-n+1)}
& ~=~  1 + \Theta \left(\dfrac{\log L}{L - n + 1} \right)~.
\end{eqnarray*}
\end{lemma}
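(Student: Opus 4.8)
The plan is to mirror the proof of Lemma~\ref{Theorem1} almost verbatim, the only new ingredient being the bookkeeping of the $n-1$ extra steps the online algorithm needs to synchronize its state history with the optimal offline path. As in the first-order case, at each time $i$ the online algorithm maximizes the $\gamma$-discounted objective of Section~\ref{PeekSearchDesc} over valid length-$(L+1)$ paths emanating from $\hat{y}_{i-1}$, and since $\Delta = 1$ the optimal offline segment $(y^*_i, \ldots, y^*_{i+L})$ is an admissible candidate (the fully connected graph permits a one-step jump to $y^*_i$).

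First I would write the analogue of \eqref{eq1}. The key distinction from $n=1$ is that, even though the online candidate can place its state at $y^*_{i+j}$ for every $j$, the reward at step $i+j$ conditions on the window $y_{[i+j-n,\,i+j-1]}$, which still contains stale online states $\hat{y}$ until $j \geq n$. Hence only the tail rewards $j = n, \ldots, L$ are guaranteed to equal the corresponding optimal offline rewards $R(y^*_{i+j}\,|\,y^*_{[i+j-n,\,i+j-1]})$. Dropping the first $n$ nonnegative terms rather than just the first one yields the online lower bound $\sum_{j=n}^{L} \gamma^j R(y^*_{i+j}\,|\,y^*_{[i+j-n,\,i+j-1]})$; this is precisely where the hypothesis $L \geq n$ is needed so that the range is nonempty. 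The persistence inequality \eqref{eq2} carries over unchanged: the path chosen at time $i-1$, shifted by one step and extended by an arbitrary valid transition of nonnegative reward, is still a candidate at time $i$.

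Then I would combine the two inequalities with weights $1-\gamma$ and $\gamma$, sum over all $i$, and telescope exactly as before, padding the first and last $L+1$ steps with zero-reward dummy transitions so that the leftover discounted-reward terms ($DR1$ and $DR2$ in the notation of Lemma~\ref{Theorem1}) cancel across consecutive windows. This leaves
$$\sum_i R(\hat{y}_i\,|\,\hat{y}_{[i-n,i-1]}) ~\geq~ (1-\gamma)\sum_{j=n}^{L}\gamma^j \sum_s R(y^*_s\,|\,y^*_{[s-n,s-1]}).$$
Summing the geometric series $\sum_{j=n}^L \gamma^j = \gamma^n(1-\gamma^{L-n+1})/(1-\gamma)$ shows that each optimal reward outside the boundary windows is captured with total weight $\gamma^n(1-\gamma^{L-n+1})$, so $\rho \leq \gamma^{-n}(1-\gamma^{L-n+1})^{-1}$. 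Optimizing the logarithm of the right-hand side gives the stationarity condition $(L+1)\gamma^{L-n+1} = n$, i.e. $\gamma = (n/(L+1))^{1/(L-n+1)}$; substituting back yields $\tfrac{L+1}{L-n+1}\big(\tfrac{L+1}{n}\big)^{n/(L-n+1)}$, and a Taylor expansion recovers the asymptotic form $1 + \Theta(\log L/(L-n+1))$.

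The hard part will be the telescoping step: I would need to verify carefully that discarding the first $n$ discounted terms (instead of just one) still leaves the residual terms in $DR1$ and $DR2$ matched across adjacent windows, and that every reward conditions on exactly the correct history (online $\hat{y}$ versus optimal $y^*$) at each index $j$. Once that accounting is settled, the geometric-series summation and the one-variable optimization over $\gamma$ are routine.
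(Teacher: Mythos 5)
Your proposal is correct and follows essentially the same route as the paper's own proof: the same two candidate-path inequalities (following the offline segment and dropping the first $n$ nonnegative terms to get the tail bound $\sum_{j=n}^{L}\gamma^j R(y^*_{i+j}|y^*_{[i+j-n,i+j-1]})$, plus persisting with the previous window's path), the same $(1-\gamma,\gamma)$ combination with telescoping under zero padding, and the same geometric-series optimization yielding $\gamma = (n/(L+1))^{1/(L-n+1)}$ and the stated ratio. The bookkeeping you flag as the hard part is resolved in the paper exactly as in Lemma~\ref{Theorem1}, by writing out the mixed conditioning histories explicitly in both inequalities so the leftover discounted terms match across consecutive windows.
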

\begin{proof}
For $n = 1$, the result follows from Lemma \ref{Theorem1}. Therefore, we will assume $n > 1$. The online algorithm finds, at time $i$, some $\tilde{y}_i \triangleq (\hat{y}_i, \tilde{y}_i^1, \ldots, \tilde{y}_i^L)$ that maximizes the following objective over valid paths $y = (y_i, \ldots, y_{i+L})$:
\begin{eqnarray*} f(y)  \triangleq  R(y_i|\hat{y}_{[i-n, i-1]}) 
& \quad+~ & \sum_{j=1}^{n-1} \gamma^j R(y_{i+j}|\hat{y}_{[i-n+j, i-1]}, y_{[i, i+j-1]}) \\ 
& \qquad+~ & \sum_{j=n}^L \gamma^j R(y_{i+j}|y_{[i+j-n, i+j-1]})~. \end{eqnarray*}
One candidate path for the online algorithm (a) makes a transition to $y_i^*$ worth $R(y_i^*|\hat{y}_{[i-n, i-1]}) \geq 0$, (b) then follows the sequence of $n-1$ states $y^*_{[i+1, i+n-1]}$ where transition $i+j, j \in [n-1]$ is worth $$\gamma^j R(y^*_{i+j}|\hat{y}_{[i-n+j, i-1]}, y^*_{[i, i+j-1]}) \geq 0~,$$ and (c) finally follows a sequence of $L-n+1$ states $y^*_{[i+n, i+L]}$ where transition $i+j, j \in \{n, n+1, \ldots, L\}$ is worth $\gamma^j R(y^*_{i+j}|y^*_{[i+j-n, i+j-1]})~.$  
Since $\tilde{y}_i \in \arg\!\max_y f(y)$ and the rewards in (a) and (b) are all non-negative, we must have 
\begin{equation} \label{Beq1} f(\tilde{y}_i) \geq \sum_{j=n}^L \gamma^j R(y^*_{i+j}|y^*_{[i+j-n, i+j-1]})~.\end{equation}

Another option available with the online algorithm is to continue following the path selected at time $i-1$ for $L$ steps, and then make an additional arbitrary transition with a non-negative reward. Therefore, we must also have
\begin{eqnarray}
f(\tilde{y}_i)  ~\geq~  R(\tilde{y}_{i-1}^1|\hat{y}_{[i-n, i-1]}) 
& ~+~ &  \sum_{j=1}^{n-1} \gamma^j R(\tilde{y}_{i-1}^{j+1}|\hat{y}_{[i-n+j, i-1]}, \tilde{y}_{i-1}^{[j]}) \nonumber \\
& ~+~ &  \sum_{j=n}^{L-1} \gamma^j R(\tilde{y}_{i-1}^{j+1}|\tilde{y}_{i-1}^{[j-n+1, j]}) \label{Beq2} ~.
\end{eqnarray}
Multiplying \eqref{Beq1} by $1-\gamma$ and \eqref{Beq2} by $\gamma$, and adding the resulting inequalities, we get
\begin{eqnarray}
f(\tilde{y}_i) & ~\geq~ & (1-\gamma) \sum_{j=n}^L \gamma^j R(y_{i+j}^*|y^*_{[i+j-n, i+j-1]}) 
 ~+~  \gamma R(\tilde{y}_{i-1}^1|\hat{y}_{[i-n, i-1]}) \nonumber \\
& ~+~ & \sum_{j=2}^n \gamma^j R(\tilde{y}_{i-1}^j|\hat{y}_{[i-n+j-1, i-1]}, \tilde{y}_{i-1}^{[j-1]})  
 ~+~  \sum_{j=n+1}^L \gamma^j R(\tilde{y}_{i-1}^j|\tilde{y}_{i-1}^{[j-n, j-1]})~. \label{Beq3}
\end{eqnarray}
Expanding the terms of $f(\tilde{y}_i)$, we note 
\begin{eqnarray} 
f(\tilde{y}_i) & ~=~ & R(\hat{y}_i|\hat{y}_{[i-n, i-1]}) ~+~ \gamma R(\tilde{y}_i^1|\hat{y}_{[i-n+1, i]}) \nonumber\\
& \qquad+~ & \sum_{j=2}^n \gamma^j R(\tilde{y}_i^j|\hat{y}_{[i+j-n, i]}, \tilde{y}_i^{[j-1]}) 
 \qquad+~ \sum_{j=n+1}^L \gamma^j R(\tilde{y}_i^j|\tilde{y}_i^{[j-n, j-1]})~. \label{Beq4}
\end{eqnarray}
Substituting $f(\tilde{y}_i)$ from \eqref{Beq4} in \eqref{Beq3}, assuming zero padding as in the proof of Lemma \ref{Theorem1}, and summing over all time steps $i$, we get the inequality
\begin{eqnarray*} 
& \displaystyle \sum_i R(\hat{y}_i|\hat{y}_{[i-n, i-1]}) 
& ~\geq~ \displaystyle \sum_i \sum_{j=n}^L (1-\gamma)  \gamma^j R(y^*_{i+j}|y^*_{[i+j-n, i+j-1]})~.
\end{eqnarray*}
Defining $s=i+j$ and interchanging the two sums, we note that the right side simplifies to 
$$(1- \gamma) \sum_{j=n}^L \gamma^j \sum_s R(y_s^*|y^*_{[s-n, s-1]})~.$$
The sum of this geometric series is given by $\gamma^n - \gamma^{L+1}$, and thus setting $$\gamma = \left(\dfrac{n}{L+1}\right)^{1/(L-n+1)}~,$$ we immediately conclude that the total reward obtained by the optimal offline algorithm exceeds that of the online algorithm by at most $\Theta\left(\dfrac{\log L}{L-n+1}\right)$ times the reward of the online algorithm, and hence we have the following bound on the competitive ratio
$$\rho \leq 1 + \Theta \left(\dfrac{\log L}{L - n + 1} \right)~.$$
\end{proof}

We are now ready to prove Theorem \ref{Theorem3}.

\section{$n^{th}$ order chain models with diameter $\Delta$}
\begin{theorem} \label{Theorem3}
The competitive ratio of Peek Search on Markov chain models of order $n \geq 1$ with diameter $\Delta \geq 1$ for  $L \geq \Delta + n - 1$,
\begin{eqnarray*} \rho & \leq & \dfrac{L+1}{L-\Delta-n+2} \left(\dfrac{L+1}{\Delta+n-1}\right)^{(n + \Delta - 1)/(L - \Delta - n + 2)} \\
& = & 1 + \Theta \left(\dfrac{\log L}{L - \Delta - n + 2} \right)~.
\end{eqnarray*}
\end{theorem}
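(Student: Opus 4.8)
The plan is to retrace the argument of Lemma~\ref{Theorem2} almost verbatim, with the order $n$ replaced throughout by the effective diameter $\tilde{\Delta} = \Delta + n - 1$. In Lemma~\ref{Theorem2} the reason the online algorithm forgoes reward on the first $n$ steps of each length-$(L+1)$ window is that, although it can reach the Viterbi state in a single transition (since $\Delta = 1$), it must then spend $n-1$ further steps rebuilding the correct $n$-state history before its rewards provably match the offline rewards. When $\Delta > 1$ the single ingredient that changes is the cost of \emph{reaching} the optimal path: by ergodicity the online algorithm can travel from $\hat{y}_{i-1}$ onto any prescribed Viterbi state along a shortest path of at most $\Delta$ transitions, after which it still needs $n-1$ steps to align its context. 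Thus the number of steps it may have to waste per window grows from $n$ to exactly $\Delta + n - 1 = \tilde{\Delta}$, and everything else carries over.

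Concretely, I would exhibit at each time $i$ a candidate path for the online maximizer that (a) follows a shortest path of at most $\Delta$ transitions from $\hat{y}_{i-1}$ onto the Viterbi path, (b) then traces $n-1$ further Viterbi states so that by time $i + \tilde{\Delta} - 1$ its last $n$ states coincide with those of the optimal path, and (c) thereafter follows the Viterbi path itself. Every transition in (a) and (b) earns non-negative reward because of the shift $R_i = \overline{R}_i + p$, so these terms may be discarded from the lower bound; and from index $j = \tilde{\Delta}$ onward the online path shares the full $n$-state context of the optimal path, so its discounted reward there is exactly $\gamma^j R(y^*_{i+j}\mid y^*_{[i+j-n, i+j-1]})$. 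Since a shorter travel route can only move the merge earlier and all rewards are non-negative, this yields the analogue of \eqref{Beq1}, namely $f(\tilde{y}_i) \ge \sum_{j=\tilde{\Delta}}^L \gamma^j R(y^*_{i+j}\mid y^*_{[i+j-n, i+j-1]})$. The second inequality, the analogue of \eqref{Beq2}, is obtained exactly as before by letting the online algorithm continue for $L$ steps along the path it committed to at time $i-1$ and then append one arbitrary, non-negatively rewarded transition.

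From here the computation is mechanical: multiply the first inequality by $1-\gamma$ and the second by $\gamma$, add them, substitute the expansion of $f(\tilde{y}_i)$ as in \eqref{Beq4}, and sum over $i$ under the usual zero-padding of the first and last $L+1$ steps. The discounted ``leftover'' terms telescope, leaving $\sum_i R(\hat{y}_i\mid \hat{y}_{[i-n,i-1]}) \ge (1-\gamma)\sum_{j=\tilde{\Delta}}^L \gamma^j \sum_s R(y^*_s\mid y^*_{[s-n,s-1]})$ after the change of variable $s = i+j$. The geometric sum equals $\gamma^{\tilde{\Delta}} - \gamma^{L+1} = \gamma^{\tilde{\Delta}}\bigl(1 - \gamma^{L-\tilde{\Delta}+1}\bigr)$, so $\rho \le \bigl(\gamma^{\tilde{\Delta}} - \gamma^{L+1}\bigr)^{-1}$, and choosing $\gamma = \bigl(\tilde{\Delta}/(L+1)\bigr)^{1/(L-\tilde{\Delta}+1)}$, which is well-defined in $(0,1)$ precisely under the hypothesis $L \ge \Delta+n-1$, gives the stated bound $\tfrac{L+1}{L-\Delta-n+2}\bigl(\tfrac{L+1}{\Delta+n-1}\bigr)^{(\Delta+n-1)/(L-\Delta-n+2)}$.

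I expect the main obstacle to be the careful bookkeeping in step (b): one must verify that the mixed-history reward terms accumulated along the merge are genuinely non-negative and hence drop out, and that the full optimal reward is recovered precisely from index $j = \tilde{\Delta}$. Establishing that $\tilde{\Delta}$ is the exact number of forfeited steps, cleanly separating the $\Delta$ ``travel'' steps from the $n-1$ ``history-alignment'' steps, is the only genuinely new point relative to Lemma~\ref{Theorem2}; the combination of the two inequalities, the telescoping, and the optimization over $\gamma$ are identical to that proof.
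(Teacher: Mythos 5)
Your proposal is correct and follows essentially the same route as the paper's proof: the paper likewise constructs a candidate path that spends at most $\Delta$ transitions (through intermediate states $\bar{y}_i,\ldots,\bar{y}_{i+\Delta-2}$) reaching the Viterbi path and $n-1$ further steps aligning the history, drops all those non-negative mixed-history terms to obtain $f(\tilde{y}_i) \ge \sum_{j=\Delta+n-1}^{L}\gamma^j R(y^*_{i+j}\mid y^*_{[i+j-n,i+j-1]})$, pairs it with the persist-with-previous-path inequality, combines with weights $1-\gamma$ and $\gamma$, telescopes, and sets $\gamma = \bigl((\Delta+n-1)/(L+1)\bigr)^{1/(L-\Delta-n+2)}$. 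Your bookkeeping of the $\tilde{\Delta} = \Delta + n - 1$ forfeited steps and the final optimization over $\gamma$ match the paper's argument exactly.
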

\begin{proof}
For $\Delta = 1$, the result follows from Lemma \ref{Theorem2}. Therefore, we will assume $\Delta > 1$.  As in the proof of Theorem \ref{Theorem2}, the online algorithm finds at time $i$ some $\tilde{y}_i \triangleq (\hat{y}_i, \tilde{y}_i^1, \ldots, \tilde{y}_i^L)$ that maximizes the following objective over valid paths $y = (y_i, \ldots, y_{i+L})$:
\begin{eqnarray*} f(y)  ~~\triangleq~~  R(y_i|\hat{y}_{[i-n, i-1]}) 
& \quad+~ &  \sum_{j=1}^{n-1} \gamma^j R(y_{i+j}|\hat{y}_{[i-n+j, i-1]}, y_{[i, i+j-1]}) \\ 
& \quad+~ & \sum_{j=n}^L \gamma^j R(y_{i+j}|y_{[i+j-n, i+j-1]})~. \end{eqnarray*}
Since $\Delta > 1$, the online algorithm may not be able to jump to the desired state on the optimal offline path in one step unlike in the setting of Lemma \ref{Theorem2}, and may require $\Delta$ steps in the worst case.\footnote{The online algorithm may require less than $\Delta$ steps depending on its current state, however, we perform a worst case analysis and therefore, our result holds even  if fewer than $\Delta$ steps may suffice to reach the optimal path at some point during the execution of the online algorithm.} Therefore, let $(\bar{y}_i, \ldots, \bar{y}_{i+\Delta-2})$ be an intermediate sequence of states before the online algorithm could transition to the optimal offline path and then follow the optimal algorithm for the remaining steps. Therefore, we have 
\begin{eqnarray}
 f(\tilde{y}_i) ~~\geq~~ R(\bar{y}_i|\hat{y}_{[i-n, i-1]}) 
& + & \sum_{j=1}^{\Delta-2} \gamma^j R(\bar{y}_{i+j}|\hat{y}_{[i-n+j, i-1]}, \bar{y}_{[i, i+j-1]}) \nonumber\\
& + & \gamma^{\Delta-1} R(y^*_{i+\Delta-1}|\bar{y}_{[i+\Delta-n-1, i+\Delta-2]}) \nonumber \\
& + & \hspace{-0.3cm}\sum_{j=\Delta}^{\Delta+n-2}\hspace{-0.1cm} \gamma^j R(y^*_{i+j}|\bar{y}_{[i+j-n-1, i+\Delta-2]}, y^*_{[i+\Delta-1, i+j-1]}) \nonumber \\
& + & \sum_{j=\Delta+n-1}^{L} \gamma^j R(y^*_{i+j}|y^*_{[i+j-n, i+j-1]}) \nonumber \\
  && \hspace*{-4cm}  ~\geq~ \sum_{j=\Delta+n-1}^{L} \gamma^j R(y^*_{i+j}|y^*_{[i+j-n, i+j-1]})~, \label{Ceq1}
\end{eqnarray}
where we have leveraged the non-negativity of rewards to obtain the last inequality. 

Another option available with the online algorithm is to continue following the path selected at time $i-1$ for $L$ steps, and then make an additional arbitrary transition with a non-negative reward. Therefore, we must also have
\begin{eqnarray}
f(\tilde{y}_i)  ~\geq~  R(\tilde{y}_{i-1}^1|\hat{y}_{[i-n, i-1]}) 
& ~+~ & \sum_{j=1}^{n-1} \gamma^j R(\tilde{y}_{i-1}^{j+1}|\hat{y}_{[i-n+j, i-1]}, \tilde{y}_{i-1}^{[j]}) \nonumber \\
& ~+~ & \sum_{j=n}^{L-1} \gamma^j R(\tilde{y}_{i-1}^{j+1}|\tilde{y}_{i-1}^{[j-n+1, j]}) \label{Ceq2} ~.
\end{eqnarray}
Multiplying \eqref{Ceq1} by $1-\gamma$ and \eqref{Ceq2} by $\gamma$, and adding the resulting inequalities, we get
\begin{eqnarray*}
f(\tilde{y}_i) & ~\geq~ & (1-\gamma) \sum_{j=\Delta+n-1}^L \gamma^j R(y_{i+j}^*|y^*_{[i+j-n, i+j-1]})
 ~+~  \gamma R(\tilde{y}_{i-1}^1|\hat{y}_{[i-n, i-1]}) \\
& ~+~ & \sum_{j=1}^{n-1} \gamma^{j+1} R(\tilde{y}_{i-1}^{j+1}|\hat{y}_{[i-n+j, i-1]}), \tilde{y}_{i-1}^{[j]})
 ~+~  \sum_{j=n}^{L-1}\gamma^{j+1}R(\tilde{y}_{i-1}^{j+1}|\tilde{y}_{i-1}^{[j-n+1, j]})~.
\end{eqnarray*}
Expanding $f(\tilde{y}_i)$, telescoping over $i$, and defining $s = i+j$ as in Lemma \ref{Theorem2}, we get that the total reward accumulated by the online algorithm is at least
$$(1-\gamma) \sum_{j=\Delta+n-1}^L \gamma^j \quad=~~ (\gamma^{n+\Delta-1}-\gamma^{L+1})$$
times the total reward collected by the optimal offline algorithm. Setting $$\gamma =  \sqrt[\leftroot{-3}\uproot{3} \scriptstyle (L - \Delta - n + 2)]{\dfrac{\Delta+n-1}{L+1}}~,$$ we immediately get
\begin{eqnarray*} \rho & \leq &  \dfrac{L+1}{L-\Delta-n+2} \left(\dfrac{L+1}{\Delta+n-1}\right)^{(n + \Delta - 1)/(L - \Delta - n + 2)} 
\\
& = & 1 + \Theta \left(\dfrac{\log L}{L - \Delta - n + 2} \right)~.
\end{eqnarray*}
\end{proof}
Note that Theorem \ref{Theorem3} suggests that essentially $n+\Delta-1$ steps are wasted every $L+1$ steps by the online algorithm in the sense that it may not receive any reward in these steps. However, the remaining steps fetch nearly the same reward as the optimal offline algorithm. In particular, the competitive ration $\rho$ gets arbitrarily close to 1, as $L$ is set sufficiently large compared to $\Delta+n$. That is, the performance of the online algorithm is asymptotically optimal in the peek $L$.

We now show that the result extends to the non-homogeneous setting. 

\section{Non-homogeneous Markov chain models}
We note that there might be multiple transitions between a pair of states during any peek window. Such transitions are considered distinct and may indeed have different rewards during the same window. We only require that the non-discounted reward committed for every transition is ``honored" at all times during the window. We have the following result.

{\em The competitive ratio of Peek Search on non-homogeneous (i.e. time-varying) Markov chain models of order $n$ with diameter $\Delta$~ for  $L \geq \Delta + n - 1$ is
\begin{eqnarray*} \rho & \leq & \dfrac{L+1}{L-\Delta-n+2} \left(\dfrac{L+1}{\Delta+n-1}\right)^{(n + \Delta - 1)/(L - \Delta - n + 2)} \\
& = & 1 + \Theta \left(\dfrac{\log L}{L - \Delta - n + 2} \right)~,
\end{eqnarray*}
provided the reward associated with any transition does not change for (at least) $L+1$ steps from the time it is revealed as peek information to the online algorithm.}   

\begin{proof}
The online algorithm maximizes the following non-stationary objective at time $i$:
\begin{eqnarray*} f_i(y)  ~~~\triangleq~~~  R_i(y_i|\hat{y}_{[i-n, i-1]}) 
& ~+~ &  \sum_{j=1}^{n-1} \gamma^j R_i(y_{i+j}|\hat{y}_{[i-n+j, i-1]}, y_{[i, i+j-1]}) \\
& ~+~ &  \sum_{j=n}^L \gamma^j R_i(y_{i+j}|y_{[i+j-n, i+j-1]})~, \end{eqnarray*}
where the subscript $i$ shown with $f$ and $R$ indicates that the rewards associated with a transition may change with time $i$. Proceeding as in the proof of Theorem \ref{Theorem3}, we get 
\begin{eqnarray*}
f_i(\tilde{y}_i) & \geq & (1-\gamma) \sum_{j=\Delta+n-1}^L \gamma^j R_i(y_{i+j}^*|y^*_{[i+j-n, i+j-1]})\\
& \qquad+ & \gamma R_i(\tilde{y}_{i-1}^1|\hat{y}_{[i-n, i-1]}) \\
& \qquad+ & \sum_{j=1}^{n-1} \gamma^{j+1} R_i(\tilde{y}_{i-1}^{j+1}|\hat{y}_{[i-n+j, i-1]}), \tilde{y}_{i-1}^{[j]})\\
& \qquad+ & \sum_{j=n}^{L-1}\gamma^{j+1}R_i(\tilde{y}_{i-1}^{j+1}|\tilde{y}_{i-1}^{[j-n+1, j]})~.
\end{eqnarray*}
However, by our assumption, we can equivalently write
\begin{eqnarray*}
f_i(\tilde{y}_i) & \geq & (1-\gamma) \sum_{j=\Delta+n-1}^L \gamma^j R_i(y_{i+j}^*|y^*_{[i+j-n, i+j-1]})\\
& \qquad+ & \gamma R_{i-1}(\tilde{y}_{i-1}^1|\hat{y}_{[i-n, i-1]}) \\
& \qquad+ & \sum_{j=1}^{n-1} \gamma^{j+1} R_{i-1}(\tilde{y}_{i-1}^{j+1}|\hat{y}_{[i-n+j, i-1]}), \tilde{y}_{i-1}^{[j]})\\
& \qquad+ & \sum_{j=n}^{L-1}\gamma^{j+1}R_{i-1}(\tilde{y}_{i-1}^{j+1}|\tilde{y}_{i-1}^{[j-n+1, j]})~.
\end{eqnarray*}
Expanding $f(\tilde{y}_i)$, summing over all $i$, and defining $s = i+j$ as in Theorem \ref{Theorem2}, we get
\begin{eqnarray*} 
 \displaystyle \sum_i R_i(\hat{y}_i|\hat{y}_{[i-n, i-1]}) 
 & ~\geq~ & \displaystyle \sum_i \sum_{j=\Delta+n-1}^L (1-\gamma)  \gamma^j R_i(y^*_{i+j}|y^*_{[i+j-n, i+j-1]})\\
& ~=~ & (1-\gamma) \displaystyle \sum_{j=\Delta+n-1}^L \gamma^j \sum_s  R_{s-j}(y^*_s|y^*_{[s-n, s-1]})\\
& ~=~ & (1-\gamma) \displaystyle \sum_{j=\Delta+n-1}^L \gamma^j \sum_s  R_{s}(y^*_s|y^*_{[s-n, s-1]})~,
\end{eqnarray*}
where we have again made use of the fact that reward due to any transition does not change for $L+1$ steps once revealed.  The rest of the proof is identical to the analysis near the end of proof for Theorem \ref{Theorem3}. 
\end{proof}

\newpage
\section{Efficient Dynamic Programs}
\begin{theorem} \label{TheoremDP}
Peek Search can compute a best $\gamma$-discounted path for the next $L+1$ steps, in $n^{th}$ order Markov chain models, in time $O(L|K|^{n})$, where $K$ is the set of states.
\end{theorem}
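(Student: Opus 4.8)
The plan is to recast the per-step optimization solved by Peek Search as a maximum-reward path problem on a layered directed acyclic graph (a \emph{trellis}) and to solve it by a discounted Viterbi-style dynamic program over \emph{super-states}, each super-state being the tuple of the last $n$ states visited along a candidate path. Concretely, I would index the trellis by an offset $\ell \in \{0, 1, \ldots, L\}$ corresponding to time $i+\ell$, and at each offset maintain, for every super-state $\mathbf{s} = (s_1, \ldots, s_n) \in K^n$ (representing $(y_{i+\ell-n+1}, \ldots, y_{i+\ell})$), the quantity $V_\ell(\mathbf{s})$ defined as the largest discounted reward $\sum_{j=0}^{\ell} \gamma^j R(\cdot)$ attainable by a valid path $(y_i, \ldots, y_{i+\ell})$ emanating from $\hat{y}_{i-1}$ whose last $n$ states equal $\mathbf{s}$. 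The desired maximizer $\tilde{y}_i$ and, in particular, its first state $\hat{y}_i = \tilde{y}_i^0$, are then read off from $\arg\max_{\mathbf{s}} V_L(\mathbf{s})$ by following backpointers.

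The core of the argument is the forward recurrence
\begin{align*}
V_\ell(\mathbf{s}) = \max_{a \in K} \left\{ V_{\ell-1}(a, s_1, \ldots, s_{n-1}) + \gamma^\ell\, R(s_n \mid a, s_1, \ldots, s_{n-1}) \right\},
\end{align*}
whose correctness rests on an optimal-substructure claim: because the order-$n$ Markov reward of the transition into $y_{i+\ell} = s_n$ depends only on the preceding $n$ states $(a, s_1, \ldots, s_{n-1})$, and because the discount $\gamma^\ell$ attached to that transition depends on the \emph{position} $\ell$ alone and not on the rest of the path, any optimal length-$\ell$ discounted path ending in $\mathbf{s}$ must append the transition $s_n$ to an optimal length-$(\ell-1)$ discounted path ending in a compatible predecessor $(a, s_1, \ldots, s_{n-1})$. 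I would verify this by a standard cut-and-paste exchange: substituting a better prefix that ends in the same predecessor super-state strictly increases the objective, since every later term, reward and discount alike, is left unchanged. The boundary is handled by initializing with the committed history $\hat{y}_{[i-n, i-1]}$: for $\ell < n$ the conditioning entries are partly determined by $\hat{y}$, which only \emph{restricts} the reachable super-states (and hence the work), and unrolling the recurrence reproduces exactly the three grouped sums (history-dependent, mixed, and fully path-dependent) appearing in the Peek Search objective.

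For the running time, I would count $|K|^n$ super-states at each of the $L+1$ layers and observe that the table at layer $\ell$ is produced from the table at layer $\ell-1$ by a single sweep over super-states, which costs $O(|K|^n)$ per layer as asserted earlier; with $L+1$ layers the entire peek window is processed in $O(L\,|K|^n)$ time, using $O(|K|^n)$ space if only two consecutive layers are retained, and recovering the actual path via backpointers adds only $O(L)$ overhead.

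The main obstacle I anticipate is not the bookkeeping but pinning down precisely \emph{why} the exponentially large path search collapses to this polynomial dynamic program, i.e.\ establishing optimal substructure cleanly in the face of two complications at once: (i) the positional discount $\gamma^\ell$, which one must argue does not couple the choice of prefix to the choice of suffix---it does not, because it scales each transition by a factor fixed by that transition's index---and (ii) the order-$n$ dependence, which forces the dynamic-programming state to be the full $n$-gram rather than a single state, since a smaller super-state would render the transition rewards ill-defined and destroy the Markov property on the lifted chain. Handling the first $n-1$ transitions, where the conditioning mixes the committed history $\hat{y}$ with freshly chosen path states, is the one place where the indexing must be carried out with care, but it introduces no additional asymptotic cost.
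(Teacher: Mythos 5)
Your overall strategy is the same as the paper's: both arguments collapse the exponentially large path search to a Viterbi-style dynamic program over suffix tuples on the $(L+1)$-layer trellis, justify the recurrence by optimal substructure (the discount $\gamma^{\ell}$ attached to a transition is fixed by that transition's index alone, so prefixes can be exchanged without affecting later terms), handle the first $n-1$ layers via the committed history $\hat{y}_{[i-n,i-1]}$, and recover the maximizing path with backpointers at $O(L)$ extra cost. The optimal-substructure portion of your argument is sound.

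The genuine gap is in your time bound, and it comes from a mismatch between your recurrence and your accounting. Your DP state is the full $n$-tuple $\mathbf{s}=(s_1,\ldots,s_n)$, and each entry $V_\ell(\mathbf{s})$ is obtained by maximizing over a predecessor $a\in K$, with reward $R(s_n\mid a,s_1,\ldots,s_{n-1})$ depending jointly on all $n+1$ states involved. Hence producing one layer costs $|K|^{n}\cdot|K|=|K|^{n+1}$ evaluations; the claim that ``a single sweep over super-states costs $O(|K|^n)$ per layer as asserted earlier'' silently drops the inner maximization over $a$, and nothing asserted earlier justifies dropping it. Because in your formulation the reward genuinely depends on all $n+1$ arguments, every layer must in the worst case touch $|K|^{n+1}$ distinct reward values, so no reordering of the maximizations rescues the count: your proof as written establishes $O(L|K|^{n+1})$, not the stated $O(L|K|^{n})$. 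The paper arrives at $O(L|K|^{n})$ by a different bookkeeping convention: its table $\Pi_i(\ell,\cdot)$ is indexed by the last $n-1$ states only, and each layer enumerates the $|K|^{n}$ tuples $(v_1,\ldots,v_n)$ with reward $R(v_n\mid v_1,\ldots,v_{n-1})$ conditioned on $n-1$ previously visited states, i.e., one unit of work per $n$-tuple. (Your conditioning on a full window of $n$ previous states is arguably the more faithful reading of an order-$n$ model, but it is precisely what introduces the extra factor of $|K|$.) To prove the theorem as stated you would need to adopt the paper's convention, shrinking the DP state to an $(n-1)$-tuple so that each layer's work is one lookup per $n$-tuple; with your convention the honest bound is larger by a factor of $|K|$.
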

\begin{proof}
Let $S_i(\ell, v_{[a,b]})$ denote the set of all valid paths of length $\ell+1$ emanating from the state $\hat{y}_{i-1}$ at time $i$, where $\ell \in \{0, 1, \ldots, L\}$, that end in the state sequence $(v_a, \ldots, v_b)$. Thus, e.g., if the directed edge $e = (\hat{y}_{i-1}, v_n)$ exists, then 
\begin{eqnarray*}
S_i(0, v[2, n]) & = ~~ 
\begin{cases}
\{e\} & \mbox{ if } v_{n-j} = \hat{y}_{i-j},~\forall j \in [ n-2]\\
\emptyset & \mbox{ otherwise },
\end{cases}
\end{eqnarray*}
where $\emptyset$ is the empty set. We also denote the reward resulting from valid paths of length $\ell+1$ that end in sequence $v_{[a, b]}$ by $\Pi_i(\ell, v[a, b])$. That is, 
\begin{eqnarray*}
\Pi_i(\ell, v_{[a, b]})  = \max_{(y_i, \ldots, y_{i+\ell}) \in S_i(\ell, v_{[a, b]})} f_{\ell}(y_{[i,i+\ell]}),
\end{eqnarray*}
where we define $f_{\ell}(y_{[i,i+\ell]})$ recursively as
\begin{eqnarray*}
f_{\ell}(y_{[i,i+\ell]}) & = &
\begin{cases}
R(y_i|\hat{y}_{[i-n, i-1]}) \hspace{6.1cm}  \ell = 0 \\ \\
f_{\ell-1}(y_{[i, i+\ell-1]}) ~+~ \gamma^{\ell} R(y_{i+\ell}|\hat{y}_{[i-n+\ell, i-1]}, y_{[i, i+\ell-1]}) \qquad \ell \in [n-1] \\ \\ 
f_{\ell-1}(y_{[i, i+\ell-1]}) ~+~ \gamma^{\ell} R(y_{i+\ell}|y_{[i-n+\ell, i+\ell-1]}) \qquad \qquad ~~~~~ \ell \in [n, L]
\end{cases}~~.
\end{eqnarray*}
Note that $f_L(y_{i, i+L})$ is precisely the objective optimized by Peek Search at time $i$. Now, suppose $\ell \in [n, L]$. Then, for any end sequence $v_{[2, n]}$, 
\begin{eqnarray*}
\Pi_i(\ell, v_{[2, n]}) & = &  \max_{y_{[i,i+\ell] \in S_i(\ell, v_{[2,n]})}} f_{\ell}(y_{[i,i+\ell]})\\
& = & \max_{v_1} \max_{y_{[i,i+\ell] \in S_i(\ell, v_{[1,n]})}} f_{\ell}(y_{[i,i+\ell]})~,
\end{eqnarray*}
which may be expanded as\footnote{We simply write $S_i(\ell, v)$ instead of $y_{[i,i+\ell]} \in S_i(\ell, v)$ in order to improve readability at the expense of abuse of notation.}
\begin{eqnarray*}
 &&  \max_{v_1 \in K} \max_{y_{[i,i+\ell] \in S_i(\ell, v_{[1,n]})}} f_{\ell-1}(y_{[i, i+\ell-1]}) ~+~ \gamma^{\ell} R(y_{i+\ell}|y_{[i-n+\ell, i+\ell-1]})\\
& =~~ &  \max_{v_1} \max_{S_i(\ell, v_{[n]})} f_{\ell-1}(y_{[i, i+\ell-1]}) ~+~ \gamma^{\ell} R(v_n|v_{[n-1]})\\
& =~~ &   \max_{v_1} \max_{S_i(\ell-1, v_{[n-1]})} f_{\ell-1}(y_{[i, i+\ell-1]}) ~+~ \gamma^{\ell} R(v_n|v_{[n-1]})\\
& =~~ & \max_{v_1}~~ \left(\Pi_i(\ell-1, v_{[n-1]}) ~+~ \gamma^{\ell} R(v_n|v_{[n-1]})\right)~.
\end{eqnarray*}
A similar analysis can be done for $\ell \in [n-1]$. Then, the maximizing path of length $\ell+1$ is in the set
$$\arg\!\!\max_{v_{[2, n]} \in K}\max_{v_1 \in K}~~ \left(\Pi_i(\ell-1, v_{[n-1]}) ~+~ \gamma^{\ell} R(v_n|v_{[n-1]})\right),$$
which requires\footnote{In addition to {\em backpointer} information that is required to determine a maximizing path as in the Viterbi algorithm once the construction of table for bookkeeping $\Pi_i$ is completed. Construction of table requires $O(L|K|^n)$ time which dominates the $O(L)$ time required for computing the path from the backpointers.} checking $O(|K|^n)$ values for $v_{[n]}$. We conclude by noting that $\Pi_i$ is updated for each $\ell \in \{0, \ldots, L\}$, and thus the total complexity is $O(L|K|^n)$. 

We sketch our efficient traceback procedure in Algorithm \ref{alg:peeksearch}. In the procedure, we let $S_i^{(\ell)}, \ell \in \{0, \ldots, L\}$ be all state sequences of length $\ell+1$ that start from state at time $i$. Thus, for instance, $S_i^{(0)}$ contains all states $y_i$ that can be reached in one step.  

\begin{algorithm}[tb]
  \caption{Peek Search ($\gamma, L, R_i, \hat{y}_{i-n}, \ldots, \hat{y}_{i-1})$}
  \label{alg:peeksearch}
\begin{algorithmic}
  \STATE {\bfseries Input:} previous states $\hat{y}_{[i-n, i-2]}$ and current state $\hat{y}_{i-1}$, latency $L$, discount factor $\gamma$ and reward function $R_i(\cdot|\cdot)$ 
  \STATE {\bfseries Output:} a sequence of states that maximizes the $\gamma$-discounted reward over paths of length $(L+1)$ \\
  \underline{\em Initialize rewards available in the immediate step}\\
  \STATE Set $y_{i-j} = \hat{y}_{i-j}, ~~~ \forall j \in [n]$  
 \STATE $$\Pi_i(0, y_{[i-n, i-1]}, y_i) = \begin{cases}
  R_i(y_i|y_{[i-n, i-1}]), y_i \in S_i^{(0)}\\
  0 \qquad \qquad \qquad ~~ \mbox{ otherwise}
  \end{cases} $$
  \underline{\em Update rewards \& backpointers incrementally}
  \STATE Define the shorthand~~~ $y_{(a, b)}^{m, n} \triangleq y_{[a+m, b+n]} $ 
  \FOR{$\ell=1$ {\bfseries to} $L$ for $y_{i+\ell} \in S_i^{(\ell)}$}
  \STATE \begin{eqnarray*} \Pi_i(\ell, y_{(i, i-1)}^{\ell-n, \ell}, y_{i+\ell})
 \hspace*{-0.4cm}  & ~=  & \max_z \bigg( \Pi_i(\ell-1, z, y_{(i, i-1)}^{\ell-n, \ell}) ~+~   \gamma^{\ell} R_i(y_{i+\ell}|z,   y_{(i, i-1)}^{\ell-n, \ell})\bigg)
  \end{eqnarray*}
\STATE Store the backpointer 
$z^*_{\ell}(y_{i+\ell})$ that maximizes the score $\Pi_i(\ell, y_{(i, i-1)}^{\ell-n, \ell}, y_{i+\ell})$ above
\ENDFOR
\STATE \underline{\em Trace back a path with maximum discounted reward}
\STATE $\tilde{y}_{i+L} \in \displaystyle \arg\!\max_{y_{i+L}} \max_{y_{[i+L-n, i-1+L]}} \Pi_i(L, y_{(i, i-1)}^{L-n, L}, y_{i+L+1}) $

\FOR{$\ell=L-1$ {\bfseries to} 0}
\STATE $\tilde{y}_{i+\ell} = z^*_{\ell+1}(\tilde{y}_{i+\ell+1})$
\ENDFOR
\STATE Set $\hat{y}_i = \tilde{y}_i$
\end{algorithmic}
\end{algorithm}

Note that both Randomized Peek Search and Peek Reset, can compute rewards on their  paths efficiently by using our procedure for Peek Search as a subroutine. For instance, Randomized Peek Search could invoke Algorithm \ref{alg:peeksearch} at each reset point with $\gamma$ set to 1, and follow this path until the next reset point.
\end{proof}

\section{Randomized Peek Search}
\begin{theorem} \label{Theorem7}
Randomized Peek Search achieves, in expectation, on Markov chain models of order $n$ with diameter $\Delta$ a competitive ratio
\begin{eqnarray*}\rho  & \leq &   1 + \dfrac{\Delta + n - 1}{L+1 - (\Delta + n - 1)} \\
 & = &  1 + \Theta \left(\dfrac{1}{L - \tilde{\Delta} + 1} \right)~.
\end{eqnarray*} 
\end{theorem}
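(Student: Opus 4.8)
The plan is to exploit the block structure created by the random reset point and reduce the entire analysis to a short counting argument. First I would fix the reset point $\ell \in \{1, \ldots, L+1\}$ and partition the horizon into consecutive blocks of length $L+1$ whose starting times are the reset times $\ell + k(L+1)$, $k \geq 0$; the partial prefix $[1, \ell-1]$ is absorbed by the zero-padding convention already used in the proofs of Lemma \ref{Theorem1} and Theorem \ref{Theorem3}. Within a single block starting at time $s$, the algorithm selects the \emph{non-discounted} ($\gamma = 1$) optimal path of length $L+1$ emanating from its current state $\hat{y}_{s-1}$. Exactly as in the diameter analysis of Theorem \ref{Theorem3}, one admissible candidate path first spends at most $\Delta$ transitions travelling to the corresponding Viterbi state and a further $n-1$ steps aligning the $n$-state context, and thereafter tracks the offline path for the remainder of the block. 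Since all rewards are non-negative, dropping the reward on this catch-up segment shows that the online reward on the block is at least the offline reward on the same block minus the offline reward on its first $\tilde{\Delta} = \Delta + n - 1$ steps.

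Summing over blocks, for each fixed $\ell$ this yields $ON_\ell \geq OPT - \sum_{t \in M_\ell} r^*_t$, where $r^*_t = R(y^*_t \mid y^*_{[t-n, t-1]})$ is the offline reward at step $t$ and $M_\ell$ is the union over blocks of their first $\tilde{\Delta}$ positions. The second step is a counting lemma: a fixed step $t$ lies in $M_\ell$ exactly when $\ell \equiv t - j \pmod{L+1}$ for some $j \in \{0, \ldots, \tilde{\Delta}-1\}$. Since $\{1, \ldots, L+1\}$ is a complete residue system modulo $L+1$ and $\tilde{\Delta} \leq L+1$, exactly $\tilde{\Delta}$ of the $L+1$ equally likely reset points place $t$ among the missed steps, so $\Pr[t \in M_\ell] = \tilde{\Delta}/(L+1)$ for every $t$.

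Taking expectation over the uniform choice of $\ell$ and using linearity, I would conclude
$$\mathbb{E}[ON] ~\geq~ OPT - \sum_{t=1}^T r^*_t \, \Pr[t \in M_\ell] ~=~ OPT\left(1 - \frac{\tilde{\Delta}}{L+1}\right) ~=~ OPT \cdot \frac{L+1-\tilde{\Delta}}{L+1},$$
whence $\rho = OPT/\mathbb{E}[ON] \leq (L+1)/(L+1-\tilde{\Delta}) = 1 + \tilde{\Delta}/(L+1-\tilde{\Delta})$, which is precisely the claimed bound. The main obstacle is the per-block guarantee of the first step: one must argue carefully that $\tilde{\Delta}$ steps genuinely suffice to reach and then context-align with the Viterbi path in the $n$-th order ergodic setting (the $\Delta$ transition steps demanded by the diameter plus the $n-1$ steps forced by the reward's dependence on the previous $n$ states), and that the block's inherited starting context $\hat{y}_{[s-n, s-1]}$ does not obstruct this alignment. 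This is the same $\Delta$--$n$ interplay already isolated in Theorem \ref{Theorem3}, but now without discounting, which is what makes both the per-block bound clean and the telescoping trivial; the counting and expectation steps are then routine.
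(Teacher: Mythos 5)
Your proposal is correct and follows essentially the same route as the paper's proof: condition on the uniformly random reset point, use the candidate path that forfeits the first $\tilde{\Delta} = \Delta + n - 1$ steps of each block to join and then track the Viterbi path, and average the missed reward over the $L+1$ reset points to get $\mathbb{E}[ON] \geq OPT\bigl(1 - \tilde{\Delta}/(L+1)\bigr)$. The only cosmetic difference is that you phrase the final averaging step as a per-step counting lemma ($\Pr[t \in M_\ell] = \tilde{\Delta}/(L+1)$), whereas the paper obtains the identical quantity by interchanging the sums over reset points, epochs, and offsets.
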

\begin{proof}
Recall that the randomized algorithm recomputes and follows a path that optimizes the non-discounted reward once every $L+1$ steps (which we call an {\em epoch}). Since the starting or reset point is chosen uniformly at random from $\{1, 2, \ldots, L+1\}$, we define a  random variable $X$ that denotes the outcome of an unbiased $(L+1)$-sided dice. Let $(X=x)$ be any particular realization.  Then, during epoch $i$, one option available with the online algorithm is to give up rewards in steps $$[i*(L+1)+x, i*(L+1)+x+\Delta+n-2]$$ to reach a state on the optimal offline path and follow it for the remainder of the epoch. Let $ON_{x}$ denote the total reward of the online randomized algorithm conditioned on realization $x$, and let $OPT$ be the optimal reward. 
Then, letting $r_t^*$ be the reward obtained by the optimal offline algorithm at time $t$  we must have
\begin{equation} \label{Geq1}
ON_x ~~~ \geq ~~~  OPT ~-~ \sum_i ~~ \sum_{t=i*(L+1)+x}^{i*(L+1)+x+\Delta+n-2} r_t^*~. \end{equation}

Since $x$ is chosen uniformly at random from $[L+1]$, we also note the expected value of the second term on the right
\begin{eqnarray*}
& = & \mathbb{E}_x \left(\sum_{i}~~ \sum_{t=i*(L+1)+x}^{i*(L+1)+x+\Delta+n-2} r_t^* \bigg| X=x \right)
\\
& = & \dfrac{1}{L+1} \sum_{x=1}^{L+1}~~\sum_{i}~~ \sum_{t=i*(L+1)+x}^{i*(L+1)+x+\Delta+n-2} r_t^* \\
& = & \dfrac{1}{L+1} \sum_{x=1}^{L+1}~~\sum_{i}~~ \sum_{z=0}^{\Delta+n-2} r_{z+i*(L+1)+x}^* \\
& = & \dfrac{1}{L+1} \sum_{z=0}^{\Delta+n-2} \left(\sum_{i} ~~ \sum_{x=1}^{L+1} ~~ r_{z+i*(L+1)+x}^* \right)~\\
& = & \dfrac{1}{L+1} \sum_{z=0}^{\Delta+n-2} OPT\\
& = & \dfrac{\Delta+n-1}{L+1} ~~OPT~~.
\end{eqnarray*}

Therefore, taking expectations on both sides of \eqref{Geq1}, 
$$\mathbb{E}_x (ON_x) \geq OPT \left(1 - \dfrac{\Delta+n-1}{L+1}\right), $$
whence the result follows immediately.

\end{proof}

\section{Peek Reset}
\begin{theorem} \label{Theorem6}
The competitive ratio of Peek Reset on Markov chain models of order $n$ with diameter $\Delta$~ for latency $L$ is 
\begin{eqnarray*} \rho & ~~\leq~~  1 ~+~ \dfrac{2(\Delta+n)(\Delta+n-1)}{L-8(\Delta+n-1)+1} 
& ~~=~~ 1 + \Theta \left(\dfrac{1}{L - 8\tilde{\Delta} + 1} \right)~.
\end{eqnarray*}
\end{theorem}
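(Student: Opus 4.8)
The plan is to exploit the phase structure of Peek Reset and charge all of the online loss to the reward foregone while re-joining the optimal offline path once per phase, and then to use the reset-point rule to argue that this foregone reward is a small fraction of $OPT$. Fix a phase $i$ beginning at $(i)=T_{i-1}$; over the segment $[(i),T_i-1]$ the algorithm follows a path maximizing the non-discounted objective $f$ from its current configuration $\hat y_{[(i)-n,(i)-1]}$. Exactly as in the repositioning argument of Theorem \ref{Theorem3} and Theorem \ref{Theorem7}, one feasible option from this (arbitrary) start is to spend at most $\tilde{\Delta}=\Delta+n-1$ steps reaching $y^{*}$ with matching length-$n$ history ($\Delta$ steps to reach the required state by ergodicity, then $n-1$ steps to align the history) and to trace $y^{*}$ thereafter. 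Since rewards are non-negative, this lower-bounds the phase reward by the offline reward on the phase minus the offline reward on the first $\tilde{\Delta}$ steps. Writing $r_t^{*}=R(y_t^{*}\mid y^{*}_{[t-n,t-1]})$ and summing over phases (with the usual zero-padding of the first and last $L+1$ steps), I would obtain
\[
ON \;\ge\; OPT \;-\; \sum_{i} W_i, \qquad W_i \;\triangleq\; \sum_{t=T_{i-1}}^{\,T_{i-1}+\tilde{\Delta}-1} r_t^{*}.
\]

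The next step is to exploit the reset rule to bound each seam contribution $W_i$. Let $M_t\triangleq\max_{y_{[t-n,t]}}R(y_t\mid y_{[t-n,t-1]})$ be the largest achievable single-step reward at time $t$, so $r_t^{*}\le M_t$ everywhere; crucially, because all rewards were shifted to be positive ($R_i=\overline R_i+p$), the window-average of $M_t$ is comparable to the offline reward collected over that window. The reset point $T_{i-1}$ minimizes $M_t$ over the length-$\Theta(L)$ look-ahead window $[(i-1)+L/2+1,(i-1)+L]$, which is exactly where ``peeking into the next phase'' enters: the second half of this window already overlaps phase $i$, so the seam is planted at the flattest valley of the reward profile available before committing to the phase length. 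I would then bound the seam reward by $\tilde{\Delta}$ times a local reward level controlled by this valley, and bound the valley level by the window-minimum of $M_t$, which is at most the window-average and hence a $\Theta(1/L)$ fraction of the offline reward over that window.

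I would then aggregate across phases. Because $T_i-(i)\ge L/2+1$, consecutive reset points are at least $L/2$ apart, so there are $O(T/L)$ phases and the look-ahead windows can be grouped into a constant number of families of pairwise-disjoint intervals; summing the per-phase estimate over these families telescopes into $\sum_i W_i\le\beta\,OPT$ with $\beta=2(\Delta+n)(\Delta+n-1)/\!\left(L-8(\Delta+n-1)+1\right)$. Here the constant $8$ absorbs the factor-two slack from the half-length window, from the overlap of adjacent windows, and from width-$\tilde{\Delta}$ seams straddling a window boundary, while the quadratic numerator $2(\Delta+n)(\Delta+n-1)$ reflects bounding each of the $\tilde{\Delta}$ seam steps against a single valley level. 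Combining with the first display gives $ON\ge(1-\beta)\,OPT$, hence $\rho=OPT/ON\le 1/(1-\beta)\le 1+\beta$, which is the claimed bound; the $\Theta\!\left(1/(L-8\tilde{\Delta}+1)\right)$ form is then immediate.

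The hard part will be the middle step: the reset rule pins down only the \emph{single} step $T_{i-1}$, yet the loss $W_i$ spans $\tilde{\Delta}$ consecutive steps that may extend beyond the window in which the valley was selected. Converting the single-point valley guarantee into a uniform bound on the offline reward over the full width-$\tilde{\Delta}$ seam, and doing so while honestly accounting for the overlaps of adjacent look-ahead windows so the per-window estimates can be summed without double counting, is the technical crux; it is precisely this gap that forces the extra factor of $\tilde{\Delta}$ in the numerator and the constant $8$ in the denominator, and is where the analysis is deliberately loose relative to the asymptotically optimal $\Theta(1/(L-\tilde{\Delta}+1))$ rate.
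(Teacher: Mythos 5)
Your opening decomposition coincides with the paper's: in each phase the online maximizer is at least as good as the option ``reposition for $\tilde{\Delta}=\Delta+n-1$ steps, then trace $y^*$,'' giving $ON \ge OPT - \sum_i W_i$ with $W_i$ the offline reward on the seam. The fatal divergence is in how you control $\sum_i W_i$: you try to bound it as a fraction of $OPT$ through the chain $W_i \le \tilde{\Delta}\cdot(\text{valley level})$, valley level $\le$ window-average of $M_t$, and window-average of $M_t$ ``comparable to'' the offline reward collected over that window. That last link is not justified and is false in general. Positivity of rewards only yields $r_t^* \le M_t$, which points the wrong way, and there is no constant-factor reverse inequality: the states achieving $M_t$ at consecutive times may be far apart, so no single path can collect more than roughly a $1/(\Delta+n)$ fraction of $\sum_t M_t$; worse, the globally optimal path may essentially sacrifice an entire window (e.g.\ to position itself for a large future reward), making $\sum_{t\in W} r_t^*$ arbitrarily small relative to $\sum_{t\in W} M_t$. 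The additive shift $p$ is never quantified in the paper and cannot rescue a multiplicative comparison. Consequently your aggregation step $\sum_i W_i \le \beta\, OPT$ has no proof, and your closing inequality $\rho \le 1/(1-\beta) \le 1+\beta$ is additionally backwards, since $1/(1-\beta) \ge 1+\beta$ for $\beta\in(0,1)$.

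The idea you are missing — and the heart of the paper's proof — is to compare the seam loss to the \emph{online} reward rather than to $OPT$. With $x_{T_i}=\min_{t}M_t$ over the reset window, the min-max definition is used in the opposite direction from yours: at \emph{every} time in the window some state fetches at least $x_{T_i}$, so a ``chasing'' path that spends at most $\Delta+n-1$ steps repositioning between collections earns at least $x_{T_i}$ every $\Delta+n$ steps. Because Peek Reset maximizes the undiscounted reward within its phase, this is a valid lower bound on its own reward, $\sum_i f(\hat y(i)) \ge \frac{L-4(\Delta+n-1)}{2(\Delta+n)}\sum_i x_{T_i}$ (the paper's \eqref{Feq1}); combined with the seam bound $\sum_i f(\hat y(i)) \ge \sum_i f(y^*(i)) - (\Delta+n-1)\sum_i x_{T_i}$ (the paper's \eqref{Feq2}), dividing the loss by $ON$ gives $\rho \le 1 + \frac{2(\Delta+n)(\Delta+n-1)}{L-4(\Delta+n-1)}$ directly, with the factor $(\Delta+n)$ in the numerator being the chasing period — a factor your accounting (seam width times valley level) cannot produce. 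This online-side comparison is legitimate precisely because the online algorithm is a within-phase maximizer, whereas $OPT$ restricted to a window is not a within-window maximizer, which is exactly why your $OPT$-based version fails. Finally, the issue you flag as the crux — that the reset rule controls only the single step $T_{i-1}$ while the seam spans $\tilde{\Delta}$ steps — is real, and in fact the paper's \eqref{Feq2} asserts the per-seam bound $(\Delta+n-1)x_{T_i}$ without further argument; but even granting that step, your route does not close, while the paper's does.
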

\begin{proof}
We will assume for now that $L$ is a multiple of $4(\Delta+n-1)$. Recall that the Peek Reset algorithm works in phases with varying lengths, and takes multiple steps in each phase. Let $(i)$ denote the time at which phase $i$ begins. Then, the algorithm follows, in phase $i$, a sequence of states $\hat{y}(i) \triangleq (\hat{y}_{(i)}, \hat{y}_{(i)+1}, \ldots, \hat{y}_{T_i-1})$ that maximizes the following objective over valid paths $y = (y_{(i)}, \ldots, y_{T_i-1})$ :
\begin{eqnarray*}
f(y)  & \triangleq & R(y_{(i)}|\hat{y}_{[(i)-n, (i)-1]}) \\
& \qquad + & \sum_{j=1}^{n-1} R(y_{(i)+j}|\hat{y}_{[(i)-n+j, (i)-1]}, y_{[(i), (i)+j-1]}) \\
& \qquad + & \sum_{j=n}^{T_i-(i)-1} R(y_{(i)+j}|y_{[(i)+j-n, (i)+j-1]})~,
\end{eqnarray*}
where $T_i$ is chosen arbitrarily from the set
$$\arg\!\!\!\!\!\!\!\!\!\!\!\!\!\min_{t \in [(i) + L/2 + 1], (i)+L]} \max_{(y_{t-n}, \ldots, y_t)} R(y_t|y_{[t-n, t-1]}) ~.$$
We define the corresponding reward
$$x_{T_i} =   \min_{t \in [(i) + L/2 + 1], (i)+L]} \max_{(y_{t-n}, \ldots, y_t)} R(y_t|y_{[t-n, t-1]}) ~.$$

Consider the portion of the path traced by the online algorithm from $\hat{y}_{(i)+L/2}$ to $\hat{y}_{T_i-1}$. Total number of edges on this path is $z_i = T_i - ((i)+L/2+1)$. We claim that the reward resulting from this sequence is at least
\begin{equation*} 
a_i = \dfrac{z_i - (\Delta+n-1)}{\Delta+n}~x_{T_i}~.
\end{equation*}
This is true since, by definition of $x_{T_i}$, at each time $t \in [(i) + L/2 + 1, (i)+L]$, there is a state $y_{t-1}$ such that moving to some state $y_t$ will fetch a reward at least $x_{T_i}$. Note that a maximum of $\Delta + n-1$ steps might have to be wasted to reach another state that fetches at least $x_{T_i}$. Thus, a reward of $x_{T_i}$ is guaranteed every $\Delta+n$ steps. While there are $z_i$ steps in this sequence, at most $\Delta+n-1$ steps may be left over as residual edges that do not fetch any reward if $z_i$ is not a multiple of $\Delta+n$. Since the online algorithm optimized for total non-discounted reward, it must have considered this alternative subsequence of steps for the interval pertaining to $z_i$. 

Next consider the portion traversed by the online algorithm from $\hat{y}_{T_i}$ to $\hat{y}_{(i)+L}$ in the next phase $(i+1)$. This phase starts at time $T_i$. By an argument analogous to previous paragraph, the online algorithm collects from this sequence an aggregate no less than
$$b_i = \dfrac{(i)+L-T_i - (\Delta+n-1)}{\Delta+n}~x_{T_i}~.$$
Thus, the reward accumulated by the online algorithm in these two portions is at least
\begin{equation*} 
a_i +  b_i = \dfrac{L - 4(\Delta+n-1)}{2(\Delta+n)}~ x_{T_i}~.
\end{equation*} 
Summing over all phases, we note that the total reward gathered by the online algorithm is
\begin{equation} \label{Feq1}
\sum_i f(\hat{y}(i)) \geq \dfrac{L - 4(\Delta+n-1)}{2(\Delta+n)}~ \sum_{i} x_{T_i}~. 
\end{equation}

Let $f(y^*(i))$ be the reward collected by the optimal offline algorithm in phase $i$. Since the online algorithm optimizes for the total reward, one possibility it considers is to forego reward in the first $(\Delta+n-1)$ steps in each phase in order to traverse the same sequence as the optimal algorithm in the remaining steps. Thus, we must have 
\begin{equation} \label{Feq2} \sum_i f(\hat{y}(i)) \geq \sum_i f(y^*(i)) - (\Delta + n - 1)\sum_{i} x_{T_i}~~.
\end{equation}
Combining \eqref{Feq1} and \eqref{Feq2}, we note for even $L$
$$ \dfrac{\sum_i f(y^*(i))}{\sum_i f(\hat{y}(i))}
\leq 1 + \dfrac{2(\Delta+n)(\Delta+n-1)}{L-4(\Delta+n-1)}~.$$ 
Accounting for $L$ that are not multiples of $4(\Delta+n-1)$, we conclude the competitive ratio of Peek Reset
$$\rho \leq 1 + \dfrac{2(\Delta+n)(\Delta+n-1)}{L-8(\Delta+n-1)+1}~~. 
$$
\end{proof}

\section{Lower Bounds}


\begin{theorem}\label{Theorem5}
The competitive ratio of any deterministic online algorithm on $n^{th}$ order (time-varying) Markov chain models with diameter $\Delta$ for latency $L$ is greater than
$$1 + \dfrac{\tilde{\Delta}}{L} \left(1 + \dfrac{\tilde{\Delta} + L -1}{(\tilde{\Delta}+L-1)^2 + \tilde{\Delta}}\right)~~.$$
In particular, when $n=1$, $\Delta=1$, the ratio is larger than
$$1 + \dfrac{1}{L} + \dfrac{1}{L^2 + 1}~~~.$$
\end{theorem}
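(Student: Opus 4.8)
The plan is to exhibit, for each fixed $L$, $n$, and $\Delta$, a single time-varying instance on which every deterministic latency-$L$ algorithm suffers a competitive ratio at least the claimed quantity. The backbone of the instance is a state graph realized as a $\Delta$-dimensional prismatic polytope whose $1$-skeleton is vertex-transitive and has diameter exactly $\Delta$; the combinatorial property I extract is that from any vertex $v_0$ there are vertices at every distance $d \in \{1,\dots,\Delta\}$, and that the automorphism group acts transitively, so that whatever vertex the online algorithm occupies can be relabelled to a canonical $v_0$. I encode the order-$n$ dependency by a reward rule under which a state pays off only when it has been occupied on $n$ consecutive steps; this is exactly what converts ``standing at distance $d$'' into ``having committed $d + (n-1)$ steps in advance,'' and it is the device that lets $n$ and $\Delta$ combine into the effective diameter $\tilde{\Delta} = \Delta + n - 1$.

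Next I would lay out the reward schedule in identical epochs. Over the first $L+1$ steps of an epoch the rewards are made invariant under the polytope's automorphisms, so that no latency-$L$ algorithm can obtain information distinguishing the vertices within its peek window; by vertex-transitivity I may assume the algorithm is driven to the canonical vertex $v_0$ at the end of this phase. Since the algorithm is deterministic, the adversary may fix the instance after simulating its forced trajectory, and at the following (the $(L+2)$-th) step it reveals a reward assigning $(n + d - 1)\alpha$ to every state at distance $d$ from $v_0$ and $0$ to $v_0$ itself. The key claim, which I would establish by a timing argument, is that a latency-$L$ algorithm sitting at $v_0$ cannot both travel the $\Delta$ hops to a distance-$\Delta$ target and then dwell there for the $n$ consecutive visits required to collect the reward before the step elapses, so it forfeits this payoff entirely; the offline optimum, with full foresight, pre-positions itself and banks the maximum $(n + \Delta - 1)\alpha = \tilde{\Delta}\,\alpha$.

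With the per-epoch picture fixed I would turn this additive loss into a ratio. Writing the online and offline totals as linear functions of the base per-step reward $\beta$ and the special scale $\alpha$, the per-epoch (hence global, by uniformity of the epochs) competitive ratio becomes a rational function of the single free parameter $\alpha/\beta$, and the stated bound is obtained by choosing this ratio, together with the epoch length near $L$, to maximize it. The leading contribution $\tilde{\Delta}/L$ reflects losing $\tilde{\Delta}$ steps' worth of reward per window of length $\Theta(L)$; the second-order correction $\frac{\tilde{\Delta}+L-1}{(\tilde{\Delta}+L-1)^2 + \tilde{\Delta}}$ is what the optimization squeezes out, the denominator $(\tilde{\Delta}+L-1)^2 + \tilde{\Delta}$ arising as the extremum of a ratio of quadratics in the reward parameters. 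Specializing to $n = \Delta = 1$ collapses the polytope to a fully connected gadget with no dwell requirement, the single mechanism being latency, and substituting $\tilde{\Delta} = 1$ recovers $1 + \tfrac{1}{L} + \tfrac{1}{L}\cdot\tfrac{L}{L^2+1} = 1 + \tfrac{1}{L} + \tfrac{1}{L^2+1}$.

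The main obstacle is twofold and concentrated in the last two steps. First, I must verify that the symmetric first phase genuinely forces an arbitrary deterministic strategy into an uninformative, canonical configuration; this needs the polytope's symmetry to be rich enough that relabelling remains valid simultaneously with the diameter and the all-intermediate-distances requirements, which is precisely why a generic graph will not do and the prismatic construction is needed. Second, and harder, is the exact amortized accounting near epoch boundaries: to obtain the sharp second-order term rather than merely the leading $\tilde{\Delta}/L$, I must track how much base reward the online algorithm salvages around the special step relative to the optimum, set up the resulting constrained optimization over $\alpha/\beta$, and solve it in closed form to produce the denominator $(\tilde{\Delta}+L-1)^2 + \tilde{\Delta}$.
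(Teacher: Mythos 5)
Your construction matches the paper's (a vertex-transitive prismatic polytope of diameter $\Delta$, the ``reward only after $n$ consecutive visits'' rule, a symmetric first window of $L+1$ steps, then a revealed column paying $(n+d-1)\alpha$ at distance $d$ and $0$ at the online algorithm's vertex $v_0$). But your central claim --- that the online algorithm ``cannot both travel the $\Delta$ hops and then dwell there for the $n$ consecutive visits required,'' and hence ``forfeits this payoff entirely'' --- is false in the regime where the theorem is meaningful. Under latency $L$, the bonus column at time $L+1$ becomes visible at decision time $1$, so the algorithm has roughly $L$ moves of lead time; whenever $\tilde{\Delta} \leq L$ it can reach the distance-$\Delta$ vertex, dwell $n-1$ steps, and collect the full bonus, at the cost of losing base rewards in transit. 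If your forfeiture claim were true, the adversary could let $\alpha/\beta \to \infty$ and force an unbounded ratio, which already signals that the claim cannot hold; the finiteness of the bound comes precisely from the online algorithm's option to jump.

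What the paper's proof does, and what your outline is missing, is a two-case adaptive analysis with a \emph{second} revealed column. After the bonus column is revealed, either (i) the online algorithm stays at $v_0$, keeping base rewards but missing the bonus, in which case the final column is set to all zeros and the ratio is $r_1 = 1 + \tfrac{\tilde{\Delta}a}{1+(L-1)a}$; or (ii) it jumps toward the bonus, losing roughly $1 + \tilde{\Delta}a$ of base reward in transit due to the consecutive-visit rule, in which case the adversary adaptively places one more reward $a$ in the final column at a state far from wherever the algorithm went (collectible only by the prescient offline optimum), giving $r_2 = 1 + \tfrac{1+\tilde{\Delta}a}{La}$. The bound in the theorem comes from setting $r_1 = r_2$, solving the resulting quadratic for $a$, and bounding the root; the denominator $(\tilde{\Delta}+L-1)^2+\tilde{\Delta}$ arises from this balance, not from an optimization in which the online side never touches the bonus. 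Your closing remark about ``tracking how much base reward the online algorithm salvages'' gestures at case (i) only; without case (ii) and the adaptive final column, the balancing equation that produces the stated constant never appears.
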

\begin{proof}
We motivate the main ideas of the proof for the specific setting of $n=2$ and unit diameter. The extension to general $n$ and unit diameter is then straightforward. Finally, we conjure an example to prove the lower bound for arbitrary $n$ and $\Delta$ via a prismatic polytope construction.  

First consider the case $n=2$ and $\Delta = 1$. We design a $3 \times (L+3)$ matrix initialized as shown below: each row corresponds to a different state, each column corresponds to a time, $``?"$ indicates that the corresponding entry is not known since it lies outside the current peek window of length $L+1$, and $a > 0$ is a variable that will be optimized later. 
\begin{equation} \label{Eeq1}
\begin{bmatrix} 
\Box0 & 1 & a & a & \ldots & a & ? & ? \\
~~~0 & 1 & a & a & \ldots & a & ? & ?\\
~~~0 & 1 &  a & a & \sunderb{5.5em}{(L-1) \text{ terms}} \ldots & a & ? & ? 
\end{bmatrix}
\end{equation}

The box in front of the first entry indicates that the online algorithm made a transition to state 1 from a dummy start state $``*"$ and is ready to make a decision in the current step $t=0$ about whether to continue staying in state 1, or transition to either state 2 or 3. At time $t=0$, the rewards for the next $L+1$ steps are identical, so without loss of generality, let the online algorithm choose the first state, get 0 as reward, and move to the next time $t=1$.  An additional column is revealed and we get the following snapshot.

\begin{equation} \label{Eeq2}
\begin{bmatrix} 
\Box0 & \Box1 & a & a & \ldots & a & 0 & ? \\
~~~0 & ~~~1 & a & a & \ldots & a & 2a & ?\\
~~~0 & ~~~1 &  a & a & \ldots & a & 2a & ?
\end{bmatrix}
\end{equation}

Since $n=2$, we may enforce the following second order Markov dependencies for $t \geq 1$: any state $s \in \{1, 2, 3\}$ yields zero reward unless the previous two states $s', s'' \in \{1, 2, 3, *\}$ were such that $s' \in \{*, s\}$ and $s'' = s$. If this condition is true, then the algorithm receives the current entry pertaining to $s$ as the reward. In other words, other than the special case of dummy start state being one of the states, the algorithm receives the reward only if $s$ is same as the previous two states.

Suppose the online algorithm selects state 1 again at $t=1$. Then it collects reward 1, and another column is revealed as shown below. 

\begin{equation} \label{Eeq3}
\begin{bmatrix} 
\Box0 & \Box1 & \Box a & a & \ldots & a & 0 & 0 \\
~~~0 & ~~~1 & ~~~a & a & \ldots & a & 2a & 0\\
~~~0 & ~~~1 &  ~~~a & a & \ldots & a & 2a & 0
\end{bmatrix}
\end{equation}

In this scenario, the maximum reward the online algorithm can fetch, during its entire execution, is at most $1 + (L-1)a$. To see this note that this is exactly the reward the algorithm gets if it sticks to state 1 at all subsequent times $t$. If, however, it were to jump to any other state and continue with it for at least one step, then it would lose rewards in successive steps due to second order dependency, for a total loss of reward $2a$. All other possibilities incur a loss greater than $2a$. This loss offsets the additional $2a$ reward available with states other than 1. On the other hand, the offline algorithm would select a state $s \in \{2, 3\}$ from the very beginning, and thus receive $1 + (L+1)a$ in total. The competitive ratio in this scenario, therefore, turns out to be  
$$r_1 = \dfrac{1 + (L+1)a}{1 + (L-1)a} = 1 + \dfrac{2a}{1 + (L-1)a}~~.$$

Suppose instead the online algorithm transitions to some state $s \in \{2, 3\}$ at $t=1$. We assume without loss of generality that the algorithm transitions to state 2. The last column is then revealed as follows.\footnote{Note that since our objective here is to prove a lower bound, we would like the competitive ratio to be as high as possible. It might be tempting to set a reward larger than $a$ for state 3 in the last column. That would imply both the online and the offline algorithms could receive an additional reward worth $a$. This, however, would not improve the competitive ratio for the simple reason that for positive x, y, and c,  $\dfrac{x+c}{y+c} > \dfrac{x}{y}$ only if $x < y$ (we instead have $x > y$ since $r_2 >1$).}   

\begin{equation} \label{Eeq4}
\begin{bmatrix} 
\Box0 & \Box1 & ~~~a & a & \ldots & a & 0 & 0 \\
~~~0 & ~~~1 & \Box a & a & \ldots & a & 2a & 0\\
~~~0 & ~~~1 &  ~~~a & a & \ldots & a & 2a & a\\
\end{bmatrix}
\end{equation}

Note that the online algorithm loses on rewards 1 and $a$ in successive steps due to transition. The maximum total reward possible in this case is $La$ regardless of whether the online algorithm makes a transition to other states, or sticks with state 2 subsequently. The offline algorithm, in contrast, would receive all rewards available in state 3. Thus, the ratio in this scenario is 
$$r_2 = \dfrac{1 + (L+2)a}{La} = 1 + \dfrac{1+2a}{La}~~.$$ 
Combining the two cases, the competitive ratio of the online algorithm is at least $\min\{r_1, r_2\}$, and thus we could set $r_1 = r_2$ and solve for $a$.

We can extend this analysis to the general $n \geq 1$ setting with unit diameter easily. We design a $3 \times (L+3)$ matrix with the same row initialization as in \eqref{Eeq1}. Also, we assume that prior to time $t=0$, only zero reward transitions were available between some dummy states\footnote{Another way to enforce the same effect, without the dummy states, is to add additional $n$ columns, with all zero rewards for all the actual states, prior to time $t=0$.}  for both the online and the offline algorithms. We denote the set of these dummy states by ${**}$. We enforce the following $n^{th}$ order Markov dependencies for $t \geq 1$: any state $s \in [m]$ yields zero reward unless the previous $n$ states were same as $s$ or had a prefix consisting only of states in $**$ followed by $s$ in the remaining time steps. If this condition is satisfied, the algorithm receives the current entry pertaining to $s$ as reward.  

The evolution of the reward matrix is as follows. Assuming state 1 was selected at $t=0$, we let column $L+2$ have all entries in rows 2 and 3 to $na$ (instead of $2a$ that we set in \eqref{Eeq2}) at $t=1$. Finally, if the online algorithm selects state 1 at $t=1$, we set the last column to all zeros at time $t=2$ as in \eqref{Eeq3}; otherwise, we set first two entries in the last column to 0, and $a$ in the last row as in \eqref{Eeq4}.

Reasoning along the same lines as before, the competitive ratio of the online algorithm is at least 
\begin{equation} \label{Eeq5} \min\bigg\{1 + \dfrac{na}{1 + (L-1)a}~, 1 + \dfrac{1+na}{La}\bigg\}~~. \end{equation}
We set
\begin{eqnarray*}
1 + \dfrac{na}{1 + (L-1)a} & =  1 + \dfrac{1+na}{La}~,
\end{eqnarray*}
whereby
\begin{eqnarray*}
a  =  \dfrac{n+L-1 + \sqrt{(n+L-1)^2 + 4n}}{2n}~~.
\end{eqnarray*}
Substituting this value for a in \eqref{Eeq5}, and leveraging that 
$$a < \dfrac{n+L-1}{n} + \dfrac{1}{n+L-1}~~,$$
we note the competitive ratio is at least
\begin{equation} \label{lowera} 1 + \dfrac{n}{L} \left(1 + \dfrac{n + L -1}{(n+L-1)^2 + n}\right)~~. \end{equation}

The foregoing analysis may be visualized geometrically in terms of a triangle, with each vertex corresponding to a state. The rewards for initial $L+1$ steps are all same, and thus the online algorithm does not have preference for any state initially. Without loss of generality, as soon as it selects state 1 (with all rewards at time $t=0$ being 0), the rewards for time step $L+2$ are chosen at $t=1$ such that states 2 and 3 would fetch reward $na$ while state 1 will fetch none. The online algorithm could either stay with state 1 and get a suboptimal total reward or jump to an adjacent vertex or state, which would not yield reward for $n$ steps.       

We now extend this analysis to accommodate any finite $\Delta \geq 1$. Toward that goal,  we consider a $\Delta$-dimensional prismatic polytope\footnote{Note that a $d$-dimensional prismatic polytope is constructed from two ($d$ - 1)-dimensional polytopes, translated into the next dimension.} 
with a triangular base (i.e. having $3$ vertices). Each vertex of the polytope corresponds to a state, and the maximum distance between two vertices is exactly $\Delta$. Moreover, for every vertex there is some vertex at distance $d$ for each $d \in [\Delta]$.  The polytope is completely symmetric with respect to all the vertices, and we again set rewards for the first $L+1$ steps at all vertices to be the same as before.

Without loss of generality, we again assume that the online algorithm starts at some state 1 (arbitrary labeled). At the next time step, the reward at all vertices that are at a distance $d$ from this vertex is set to $(n+d-1)a$. Thus, the vertices adjacent to state 1 have reward $na$ in column $(L+2)$ since they lie at distance $d=1$, while the reward for state 1 in this column is 0. Thus, the maximum reward is available at distance $d=\Delta$ from state 1, however, the online algorithm will need to make $\Delta$ steps to reach such a state, and then wait another $n-1$ steps before availing this reward. Thus, effectively, $\tilde{\Delta} = n+\Delta-1$ steps are wasted that the offline algorithm could fully exploit due to prescience. Proceeding along same lines as before, and replacing $n$ with $\tilde{\Delta}$ in \eqref{lowera}, we conclude that the competitive ratio of any deterministic online algorithm on our construction  is at least         
$$1 + \dfrac{\tilde{\Delta}}{L} \left(1 + \dfrac{\tilde{\Delta} + L -1}{(\tilde{\Delta}+L-1)^2 + \tilde{\Delta}}\right)~~.$$

\end{proof}

\begin{theorem} \label{RandomizedLower}
For any $\epsilon > 0$, the competitive ratio of any randomized online algorithm, that is allowed latency $L$, on $n^{th}$ order (time-varying) Markov chain models with $\Delta = 1$ is at least $$1 + \dfrac{(1-\epsilon)n}{L+\epsilon n}~.$$

For a general diameter $\Delta$, the competitive ratio is at least 
$$ 1 + \dfrac{\left(2^{\Delta-1} \lceil 1/\epsilon \rceil - 1 \right)n}{2^{\Delta-1} \lceil 1/\epsilon \rceil L + n}~.$$
\end{theorem}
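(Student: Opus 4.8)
The plan is to invoke Yao's minimax principle: I will exhibit a hard \emph{distribution} over time-varying reward assignments on which every \emph{deterministic} online algorithm incurs the stated ratio in expectation, which then lower-bounds the competitive ratio of every randomized algorithm. The instance is a concatenation of identical \emph{epochs}. In each epoch the rewards for the first $L+1$ steps are made completely symmetric across the relevant states, and a single \emph{winner} $s^\ast$, drawn uniformly from a designated pool of $M$ states, is granted a bonus of $n$ at a terminal step placed \emph{just beyond} the peek window of the epoch's opening move. I retain the $n^{th}$ order dependency used in Theorem~\ref{Theorem5}: a state pays its reward only once its preceding $n$ states all coincide with it. Two features follow. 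First, the optimal offline path, knowing $s^\ast$, parks on the winner for the whole epoch and collects both the symmetric baseline (total $L$) and the bonus, so $OPT = L + n$ for \emph{every} realization of $s^\ast$. Second, because the bonus sits outside the peek window of the opening move, the algorithm must choose its initial state --- its ``home'' --- with no information about $s^\ast$.

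The crux is then a short information-plus-accounting argument. Since the home is selected blindly, its (possibly randomized) identity is independent of the uniform winner, so $\Pr[\text{home} = s^\ast] = 1/M$. If home $= s^\ast$, the algorithm can stay put and matches $OPT$. If home $\neq s^\ast$, it has two options: remain and forgo the bonus, or move onto $s^\ast$ later; but any such move breaks the consecutive-occupancy window and forfeits exactly $n$ baseline steps --- precisely the value of the bonus --- so reacting is a wash and latency buys nothing. Hence $\mathbb{E}[ON] \le L + n/M$ per epoch while $OPT = L+n$ is constant, and the competitive ratio of any randomized algorithm on this distribution is at least
\[
\rho ~\ge~ \frac{L+n}{\,L + n/M\,} ~=~ 1 + \frac{(M-1)\,n}{M L + n}~.
\]

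It remains to fix the pool size $M$. For $\Delta = 1$ (fully connected) I simply take $M = \lceil 1/\epsilon \rceil$ distinct states; since $1/M \le \epsilon$ the display gives $\rho \ge (L+n)/(L+\epsilon n) = 1 + (1-\epsilon)n/(L+\epsilon n)$, the first claim. For general $\Delta$ I replace the flat pool by the $\Delta$-dimensional prismatic polytope of Theorem~\ref{Theorem5}: its full vertex-symmetry lets me hide the winner uniformly among the $2^{\Delta-1}$ vertices at the maximal distances, and replicating the gadget $\lceil 1/\epsilon \rceil$ times produces $M = 2^{\Delta-1}\lceil 1/\epsilon \rceil$ equiprobable winners, so the chance of a correct initial guess is exponentially small in $\Delta$. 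The diameter plays a double role here: besides inflating $M$, it forces a mistaken algorithm to spend up to $\tilde{\Delta} = \Delta + n - 1$ steps even to reach $s^\ast$, reinforcing that the only cost-free way to the bonus is to be at the winner from the very start. Substituting $M = 2^{\Delta-1}\lceil 1/\epsilon \rceil$ yields exactly $1 + (2^{\Delta-1}\lceil 1/\epsilon \rceil - 1)n\big/(2^{\Delta-1}\lceil 1/\epsilon \rceil L + n)$.

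I expect the main obstacle to lie in making the reduction to ``expected bonus $\le n/M$'' watertight against \emph{all} strategies, not just the natural stay-or-switch ones: I must verify (i) that placing the bonus one step past the opening peek window truly renders the initial choice independent of $s^\ast$; (ii) that the $n$-consecutive rule makes \emph{every} trajectory that ends on $s^\ast$ pay at least the bonus in forgone baseline, so no adaptive schedule beats $L + n/M$; and (iii) that the prismatic polytope can be equipped with exactly $2^{\Delta-1}$ mutually interchangeable maximal-distance winners while keeping the first $L+1$ rewards symmetric --- the point where the interplay of $L$, $n$, and $\Delta$ is most delicate. Boundary epochs and the rounding when $L+1$ fails to divide the horizon are handled by zero-padding exactly as in the earlier proofs.
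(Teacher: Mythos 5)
Your proposal is correct and takes essentially the same route as the paper's own proof: the identical gadget (a symmetric baseline worth $L$, a bonus of $n$ hidden one step beyond the opening peek window, the $n$-consecutive-occupancy reward rule, and a winner uniform over $\lceil 1/\epsilon\rceil$ states for $\Delta=1$, respectively over the $2^{\Delta-1}\lceil 1/\epsilon\rceil$ vertices of the prismatic polytope in general), the same ``switching is a wash'' accounting, and the same bound $\mathbb{E}[ON]\le L+n/M$ against $OPT=L+n$. The only cosmetic deviations are your explicit appeal to Yao's principle---the paper argues directly against the random winner, which is equivalent here since $OPT$ is constant over the input distribution---and the epoch concatenation, which is unnecessary because a single instance of $L+2$ time steps suffices.
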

\begin{proof}
First consider the unit diameter setting (i.e. $\Delta = 1$). We design a matrix with $\lceil 1/{\epsilon} \rceil$ rows and $L+2$ columns. The first column consists of all zeros, the next $L$ columns contain all ones, and the last column contains all zeros except one randomly chosen row that contains $n$. We again enforce the Markov dependency structure described in the proof of Theorem \ref{Theorem5} for all states (or rows) in $[\lceil 1/{\epsilon} \rceil]$.  

The optimal offline algorithm knows beforehand which row $q$ contains $n$ in the last column, and thus collects a total reward $L+n$. On the other hand, any randomized online algorithm chooses this row at $t=0$ with only probability $\epsilon$. Selecting any other row at $t=0$ may fetch a maximum reward of $L$ accounting for all the possibilities including sticking to this row subsequently, or moving to $q$ in one or more transitions. Since the randomized algorithm is assigned at time $t=0$ with the remaining probability $(1-\epsilon)$ to some row other than $q$, its expected reward cannot exceed
$$\epsilon * (L+n) + (1-\epsilon) * L = L + \epsilon n.~$$
Thus, when $\Delta=1$, the competitive ratio for any randomized online algorithm is at least $\dfrac{L+n}{L+\epsilon n}$~ as claimed. For the general setting, we consider a $\Delta$-dimensional prismatic polytope 
with  the base containing $\lceil 1/{\epsilon} \rceil$ vertices. In addition to the usual prismatic polytope topology (assuming bidirectional edges between any pair of adjacent vertices), we add edges so that vertices on each face are strongly connected, i.e., directed edges in both directions connect all pairs of vertices that lie on a face. The polytope contains $u = 2^{\Delta-1} \lceil 1/{\epsilon} \rceil$ states in total. We design a matrix having these many rows and $L+2$ columns as before. 
Any randomized online algorithm has only a $1/u$ probability of getting the maximum possible $L+n$ reward (due to selecting $q$ and sticking with it), and must forfeit a reward no less than $n$ with the remaining probability. Thus, the expected reward cannot exceed $$(L+n)/u + (1-1/u)*L = L + n/u,$$
while the maximum possible reward is $L+n$. Thus the competitive ratio is at least $\dfrac{L+n}{L + n/u}$ which simplifies to the result stated in the problem statement. 
\end{proof}



\end{document}